\documentclass{article}
\usepackage[T1]{fontenc}
\usepackage{2027_conference,times}
\usepackage{amsmath,amsfonts,bm}
\usepackage{natbib}
\usepackage{url}
\usepackage{graphicx}
\usepackage{booktabs}
\usepackage{longtable}
\usepackage{float}
\usepackage{amsthm}
\usepackage{zref-savepos}
\usepackage{hyperref}
\usepackage{multirow}
\newtheorem{proposition}{Proposition}
\newcounter{algorithm}

\newcommand{\sg}{\operatorname{sg}}

\newcommand{\ours}[0]{\textsc{PivotMipl}}
\newcommand{\miplma}[0]{\textsc{MiplMa}}
\newcommand{\elimipl}[0]{\textsc{EliMipl}}
\newcommand{\demipl}[0]{\textsc{DeMipl}}
\newcommand{\miplgp}[0]{\textsc{MiplGp}}
\newcommand{\fastmipl}[0]{\textsc{FastMipl}}
\newcommand{\promipl}[0]{\textsc{ProMipl}}
\newcommand{\psmipl}[0]{\textsc{PsMipl}}
\newcommand{\dualg}[0]{\textsc{DualG}}
\newcommand{\agopmipl}[0]{\textsc{AgopMipl}}
\newcommand{\uamipl}[0]{\textsc{UAMipl}}

\title{Joint Class–-Time Learning for Video Classification with Multi-Instance Partial-Label Learning}

\author{
Lingyu Shen$^{1}$\thanks{These authors contributed equally to this work.}
\quad
Wei Tang$^{2}$\footnotemark[\value{footnote}]
\quad
Fakhri Karray$^{2}$
\quad
Min-Ling Zhang$^{1}$\thanks{Corresponding author. Email: \texttt{zhangml@seu.edu.cn}}
\\[3pt]
\normalfont\small
$^{1}$ School of Computer Science and Engineering, Southeast University, Nanjing 210096, China
\\
$^{2}$ Mohamed bin Zayed University of Artificial Intelligence (MBZUAI), Abu Dhabi, United Arab Emirates
}

\begin{document}
\maketitle

\begin{abstract}
Multi-instance partial-label learning (MIPL) addresses inexact supervision in both the instance and label spaces, which can be applied to video classification.
However, bag-level labels do not explicitly supervise the correspondence between candidate classes and temporal evidence. We propose {\ours}, which couples label disambiguation with temporal evidence allocation through a joint class--time assignment. Occupancy-regularized spherical matching associates contextualized video features while learning nonuniform temporal mass and discouraging excessive concentration. During training, candidate-restricted inference recomputes the assignment within the candidate label set. A dual-marginal KL projection then constructs a structured teacher that incorporates momentum-refined class beliefs while preserving the proposal's temporal occupancy. A single plan-level KL objective aligns the full-space predictor with this teacher. Our analysis characterizes when candidate re-solving differs from masking and shows that, under the stated construction, the joint objective decomposes into class-marginal and class-conditional temporal supervision. We construct VCMIPL benchmarks from Breakfast, DoTA, and FineAction using model-generated candidate labels and evaluate the method across four feature representations. Extensive experimental results demonstrate that {\ours} outperforms existing MIPL algorithms in both effectiveness and efficiency.
\end{abstract}

\section{Introduction}
\label{sec:introduction}
Learning to classify videos from weak annotations requires identifying both the true label and its supporting temporal evidence. Video-level labels do not specify which frames or clips support the target class within a sequence that may also contain background, transitions, and related activities, resulting in inexact supervision in the input space \citep{zhang2025rethinking}. At the class level, candidate pseudolabeling retains several plausible classes from model predictions to generate the candidate label set \citep{zhang2024candidate}. It contains the true label and several false-positive labels, which does not specify the true label and provides inexact supervision in the output space \citep{wang2025realistic}. Multi-instance partial-label learning (MIPL) captures both forms of inexact supervision by representing each example as a bag of instances paired with a candidate label set containing its true label and several false-positive labels \citep{MIPLGP}. We formulate this setting as \emph{video classification with MIPL (VCMIPL)}, where each bag is an ordered sequence of frame or clip features and its supervision is a candidate label set. 

Existing methods offer several ways to learn from inexact supervision. Attention-based MIPL couples bag representation learning with momentum-based label disambiguation \citep{DEMIPL}. In partial-label learning, prototype-guided disambiguation \citep{wang2022pico, jin2025npc} and transport-based label refinement \citep{wang2022solar, gu2025mate} exploit feature geometry and assignment constraints, respectively. Weakly-supervised video classification jointly learns classification and temporal instance selection from explicit video-level labels \citep{wang2017untrimmednets}. However, in VCMIPL, temporal evidence must be inferred while the true class remains ambiguous. Although instance selection and label disambiguation can interact during learning, refining bag-level class probabilities alone does not determine how temporal evidence is allocated to each candidate class. Distinct class–time assignments can yield identical class probabilities, even when they assign the same total mass to every temporal position. This motivates learning from a structured class–time target that combines refined class beliefs with temporal allocation inferred within the candidate set.

In this paper, we propose a VCMIPL method named {\ours}, i.e., \emph{Plan Inference via Variable-Occupancy Transport for Multi-Instance Partial-Label learning}, which represents each video through a class--prototype--time assignment. A temporal encoder produces contextualized features that are matched to multiple prototypes per class on the unit sphere. Temporal occupancy, the mass assigned to each position, is inferred jointly with the assignment, with regularization discouraging excessive concentration. During training, we re-solve the assignment within the candidate label set to obtain a proposal and refine its class marginal through momentum accumulation. A KL projection then constructs the closest teacher to this proposal subject to the refined class marginal and the proposal's temporal occupancy. A single plan-level KL objective aligns the full-space predictor with the structured teacher. At test time, prediction uses only the full-space assignment, requiring neither candidate labels nor teacher construction.

We construct three VCMIPL benchmarks based on Breakfast \citep{kuehne2014breakfast}, DoTA \citep{yao2023dota}, and FineAction \citep{liu2022fineaction} datasets, using four feature representations. Extensive experimental results on the benchmarks show that {\ours} achieves the highest mean accuracy and balanced accuracy in 10 of 12 configurations, averaging gains of $3.16\%$ and $3.26\%$ over the strongest baseline per configuration, respectively. And the accuracy is significantly higher than the strongest baseline in 8 of 12 configurations. Ablations support the contributions of occupancy regularization, adaptive temporal allocation, joint class–time supervision, dual-marginal teacher construction, multiple prototypes, and temporal encoding. Gains persist in 10 configurations against {\agopmipl} \citep{AGOPMIPL}, which is the strongest baseline, with the same temporal encoder. Further analyses show favorable accuracy–training-time trade-offs in several configurations and stable performance over the tested regularization ranges on three representative configurations.

Our contributions are threefold:
First, we introduce a structured framework for VCMIPL that couples candidate-restricted temporal allocation with label disambiguation through a joint assignment and a teacher constrained by class and temporal marginals.
Second, we analyze the inference and supervision mechanisms, characterizing the relationship between candidate re-solving and masking, the concentration control induced by occupancy regularization, and the effective supervision provided by the joint objective.
Third, we construct three VCMIPL benchmarks with model-generated candidate label sets with four feature representations, and evaluate the framework against representative MIPL methods on the benchmarks.

\section{Related work}\label{Related work}
\subsection{Multi-Instance Partial-Label Learning}
MIPL learns from bags with ambiguous candidate labels, addressing inexact supervision in both instance and label spaces. \citet{MIPLGP} introduced instance-level probabilistic disambiguation, whereas \citet{DEMIPL} used attention-based bag representations and momentum-refined label beliefs. Subsequent methods exploit complementary candidate- and non-candidate-label information~\citep{EliMIPL}, attention and prediction margins~\citep{MIPLMA}, probabilistic generative modeling of latent label distributions~\citep{ProMIPL}, and instance- and bag-level dependencies and heterogeneity for computational efficiency~\citep{FastMIPL}.
Recent approaches incorporate candidate-aware propensity scoring and deconfounding~\citep{PSMIPL}, feature and label propagation over bag- and instance-level graphs~\citep{dualg}, feature geometry based on average gradient outer products~\citep{AGOPMIPL}, aggregation uncertainty through evidential learning~\citep{UAMIPL}, and prediction calibration~\citep{tang2026calibratable}. 
However, our focus is on the specific coupling of candidate-restricted temporal occupancy, historical class beliefs, and a teacher constrained by class and temporal marginals.

\subsection{Weakly Supervised Video Learning}
Weakly supervised video learning infers temporal evidence from video-level labels. UntrimmedNets represents videos as temporal segments and jointly learns segment selection and video-level classification~\citep{wang2017untrimmednets}. Later methods model intra- and inter-segment dependencies~\citep{he2022asm}, organize evidence into proposals~\citep{ren2023proposal}, and infer salient snippet features from variations between neighboring time steps~\citep{yun2024weakly}. PseudoFormer progressively converts weak predictions into more reliable pseudo supervision for temporal localization~\citep{liu2025bridge}, while multimodal large language models provide semantic priors~\citep{zhang2025weakly}. These methods generally assume unambiguous video labels. Our setting additionally requires identifying the true class within a candidate set and linking it to supporting temporal evidence.

\section{Methodology}\label{Methodology}
\subsection{Problem Formulation and Overview}
Let $\mathcal{D}=\{(\mathbf{X}_i,S_i)\}_{i=1}^{N_{\mathrm{train}}}$ denote a MIPL training dataset of $N_{\mathrm{train}}$ videos. Each video is represented as an ordered multi-instance bag $\mathbf{X}_i=(\mathbf{x}_{i1},\ldots,\mathbf{x}_{in_i})\in\mathbb{R}^{n_i\times d}$, where $n_i$ is the number of instances and $d$ is the feature dimension. Each video has a single target label $Y_i\in\mathcal{Y}=\{1,\ldots,C\}$, but is annotated only with a nonempty candidate label set $S_i\subseteq\mathcal{Y}$ containing $Y_i$. We adopt collective bag classification, allowing multiple temporal positions to jointly support the target class without requiring every raw frame to belong to that class. 

The framework of {\ours} is shown in Figure \ref{fig:framework}. A feature-aware temporal encoder produces contextualized instance representations. Occupancy-regularized spherical transport matches these representations to multiple prototypes per class, yielding an actor plan over the full label space with adaptive temporal occupancy while discouraging excessive concentration. During training, we re-solve the transport problem within the candidate label set to obtain a proposal and refine its class marginal through momentum accumulation. We then construct a structured teacher plan that matches the refined class marginal while preserving the proposal's temporal occupancy. A plan-level KL objective aligns the actor with the detached teacher, jointly supervising class assignments and class-conditional temporal distributions.


\begin{figure}[!t]
    \centering
    \includegraphics[width=1\linewidth]{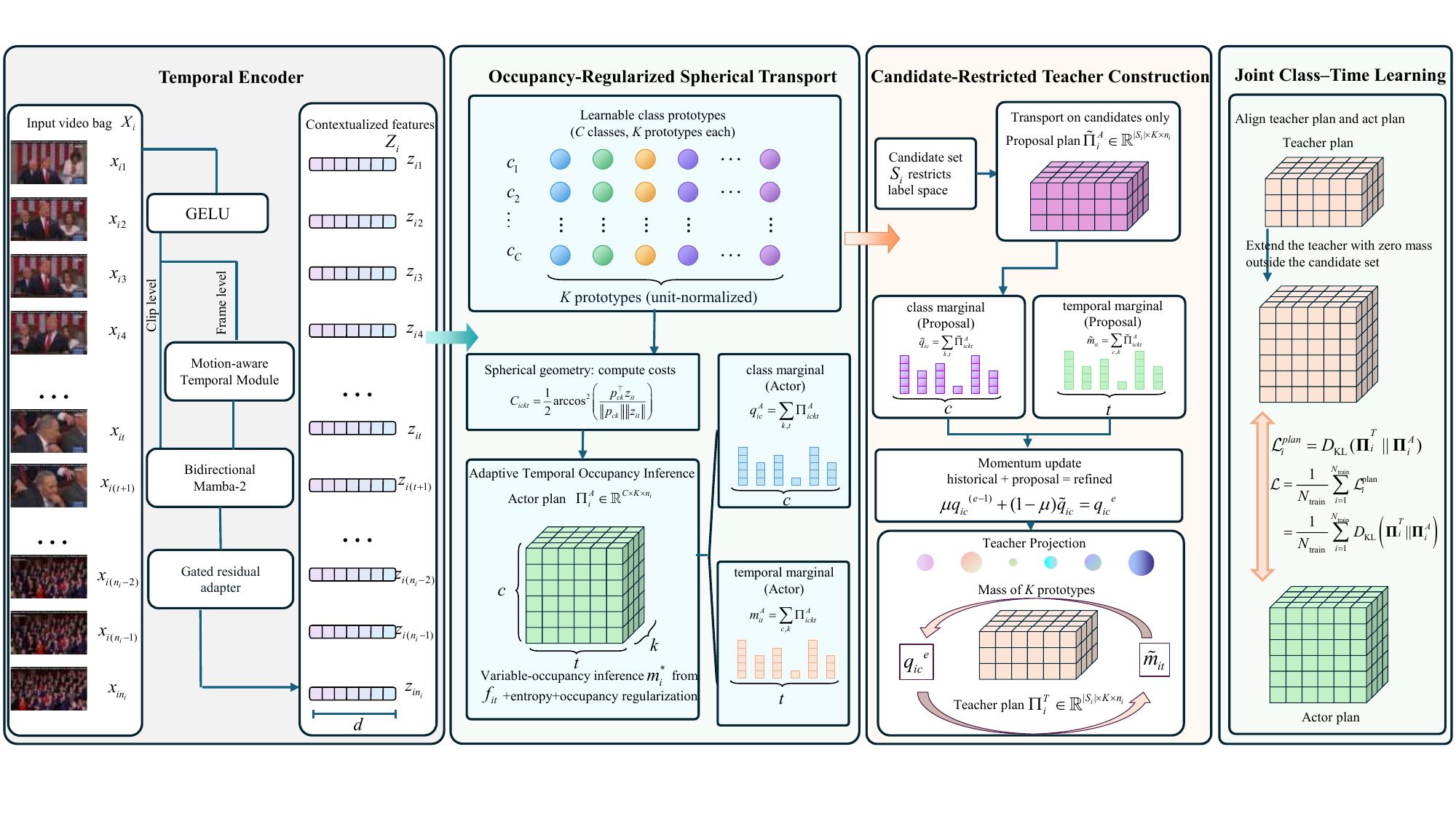}
    \vspace{-10mm}
    \caption{The framework of {\ours}}
    \label{fig:framework}
\end{figure}

\subsection{Temporal Encoder}
\label{sec:encoder}
The temporal encoder maps a multi-instance bag $\mathbf X_i\in\mathbb R^{n_i\times d}$ to contextualized features $\mathbf Z_i\in\mathbb R^{n_i\times d}$. A local difference module processes frame-level ResNet~\citep{he2016resnet} and DINOv3~\citep{simeoni2025dinov3} features, while clip-level SlowFast~\citep{feichtenhofer2019slowfast} and VideoMAEv2~\citep{wang2023videomaev2} features bypass this module because they already represent local clips. Both paths then use bidirectional Mamba-2~\citep{dao2024mamba2} and a gated residual adapter. We denote row-wise affine maps by $\operatorname{Aff}_a(H)=HW_a^\top+\mathbf1b_a^\top$, with learnable weights
and biases, and apply layer normalization (LN) over the feature dimension. The input projection maps the features from $d$-dimensional to $d_h$-dimensional channels:
\begin{equation}
 \mathbf H_i^{(0)}=\operatorname{GELU}
 \bigl(\operatorname{Aff}_{\rm in}(\operatorname{LN}(\mathbf X_i))\bigr),
 \qquad \mathbf H_i^{(0)}\in\mathbb R^{n_i\times d_h}.
 \label{eq:encoder_projection}
\end{equation}
Here $\operatorname{Aff}_{\rm in}$ maps $d$-dimensional input channels to $d_h$-dimensional hidden channels. 

\paragraph{Local Temporal Differences.}\label{Motion-Aware Module}
For frame-level features, we first smooth the projected sequence with a three-position moving average, then compute backward and forward differences:
\begin{equation}
\widehat{\mathbf h}_{it}^{(0)}=\frac{1}{3}\left(\mathbf h_{i,t-1}^{(0)}+\mathbf h_{it}^{(0)}+\mathbf h_{i,t+1}^{(0)}\right),\qquad
\Delta_{it}^{-}=\widehat{\mathbf h}_{it}^{(0)}-\widehat{\mathbf h}_{i,t-1}^{(0)},\qquad
\Delta_{it}^{+}=\widehat{\mathbf h}_{i,t+1}^{(0)}-\widehat{\mathbf h}_{it}^{(0)}.
\label{eq:local_motion}
\end{equation}
Here $\mathbf h_{it}^{(0)}$ denotes the $t$-th row of $\mathbf H_i^{(0)}$. The moving average uses replicate padding at both sequence boundaries. We set $\Delta_{i1}^{-}=\mathbf 0$ and $\Delta_{in_i}^{+}=\mathbf 0$; when $n_i=1$, both difference vectors are zero. We concatenate the two difference
sequences along the channel dimension, denoted by $[\cdot;\cdot]$, project them from $2d_h$-dimensional channels to $d_h$-dimensional ones, and apply two convolutional branches:
\begin{equation}
 \mathbf D_i=\operatorname{Aff}_{\Delta}
    ([\boldsymbol\Delta_i^-;\boldsymbol\Delta_i^+]), \qquad
 \mathbf U_i^{(j)}=\operatorname{SiLU}\bigl(\mathcal C_{\delta_j}(\mathbf D_i)\bigr),
 \qquad(\delta_1,\delta_2)=(1,2).
 \label{eq:encoder_motion_convolution}
\end{equation}
$\mathcal C_{\delta_j}$ contains a depthwise temporal convolution with kernel size of $5$ and dilation $\delta_j$, followed by a $1\times1$ pointwise convolution for channel mixing. The two branches have receptive fields of $5$ and $9$ positions over $\mathbf D_i$, respectively, and each produces an output of shape $n_i\times d_h$. Here $\operatorname{SiLU}(x)=x\sigma(x)$, where $\sigma$ is the sigmoid function. We fuse the branch outputs and form a gated motion residual: \looseness=-1
\begin{equation}
 \begin{aligned}
 \mathbf M_i&=\operatorname{SiLU}\bigl(\operatorname{LN}
       (\operatorname{Aff}_{M}([\mathbf U_i^{(1)};\mathbf U_i^{(2)}]))\bigr),\\
 \mathbf G_i&=\sigma\bigl(\operatorname{Aff}_{g}
       ([\mathbf H_i^{(0)};\mathbf M_i])\bigr),\qquad
 \mathcal M(\mathbf H_i^{(0)})=
 \mathbf H_i^{(0)}+\operatorname{Dropout}_{0.1}(\mathbf G_i\odot\mathbf M_i).
 \end{aligned}
 \label{eq:encoder_motion_gate}
\end{equation}
Both $\operatorname{Aff}_{M}$ and $\operatorname{Aff}_{g}$ map $2d_h$ channels to $d_h$. The gate $\mathbf G_i$ weights the input-dependent motion representation $\mathbf M_i$ at each temporal position and hidden channel. The input to the bidirectional encoder is $\widetilde{\mathbf H}_i^{(0)}=\mathcal M(\mathbf H_i^{(0)})$ for frame-level features and $\widetilde{\mathbf H}_i^{(0)}=\mathbf H_i^{(0)}$ for clip-level features.

\paragraph{Bidirectional Temporal Context.}
We apply $L$ bidirectional Mamba-2 blocks in sequence:
\begin{equation}
 \mathbf H_i^{(L)}=\mathcal B^{(L)}\circ\cdots\circ\mathcal B^{(1)}
       (\widetilde{\mathbf H}_i^{(0)}).
 \label{eq:encoder_blocks}
\end{equation}
Each block processes the sequence in forward and reverse temporal order. The backward output is restored to the original order, and the two outputs are combined using a feature-dependent directional gate before a feed-forward residual update. The output adapter projects the hidden features back to $d$-dimensional channels and adds a gated residual to the input:
\begin{equation}
 \mathbf Z_i=\mathbf X_i+\sigma(\gamma)\operatorname{Dropout}
 \bigl(\operatorname{Aff}_{\rm out}(\operatorname{LN}(\mathbf H_i^{(L)}))\bigr).
 \label{eq:temporal_encoder}
\end{equation}
Here $\operatorname{Aff}_{\rm out}$ maps $d_h$-dimensional channels to $d$-dimensional ones, and $\gamma\in\mathbb R$ is a learnable scalar. The identity path preserves the input features, while the residual incorporates temporal context. Therefore, the output positions are contextualized features rather than independent raw snippets.

\subsection{Occupancy-Regularized Spherical Transport}
\paragraph{Spherical Matching And Occupancy Regularization.}
Each class has $K$ learnable prototypes $\mathbf{p}_{ck}\in\mathbb{R}^d$. For nonzero features and prototypes, define the unit-normalized vectors $\bar{\mathbf z}_{it}=\mathbf z_{it}/\|\mathbf z_{it}\|_2$ and $\bar{\mathbf p}_{ck}=\mathbf p_{ck}/\|\mathbf p_{ck}\|_2$. The matching cost is half the squared geodesic distance on the unit sphere:
\begin{equation}
C_{ickt}=\tfrac12\arccos^2(\bar{\mathbf p}_{ck}^{\top}\bar{\mathbf z}_{it}).
\label{eq:spherical_cost}
\end{equation}
For any nonempty admissible class set $\mathcal A \subseteq\mathcal Y$, define $\Delta_i^{\mathcal A}$ as the unit simplex over $(c,k,t)\in\mathcal A\times\{1,\ldots,K\}\times\{1,\ldots,n_i\}$. With temporal occupancy $m_{it}=\sum_{c\in\mathcal A,k}\Pi_{ickt}$, our inference operator is \looseness=-1
\begin{equation}
\begin{split}
\mathcal T_{\varepsilon,\lambda}(\mathbf C_i;\mathcal A)
=\arg\min_{\boldsymbol\Pi\in\Delta_i^{\mathcal A}}\bigg\{
&\sum_{c\in\mathcal A,k,t}\Pi_{ickt}C_{ickt}
+\varepsilon\sum_{c\in\mathcal A,k,t}\Pi_{ickt}(\log\Pi_{ickt}-1)\\
&+\frac\lambda2\left(n_i\sum_t m_{it}^2-1\right)\bigg\},
\end{split}\label{eq:occupancy_transport}
\end{equation}
where $\varepsilon>0$ and $\lambda\geq0$. Here, ``transport'' refers to a regularized assignment with freely optimized class and temporal marginals.
Let $\mathbf u_i$ be the uniform temporal distribution, with $u_{it}=1/n_i$. The occupancy penalty can be written as 
\begin{equation}
R_{\rm occ}(\mathbf m_i)=\chi^2(\mathbf m_i\|\mathbf u_i)
=e^{D_2(\mathbf m_i\|\mathbf u_i)}-1
=n_i\sum_t m_{it}^2-1,
\label{eq:occupancy_regularizer}
\end{equation}
where $\chi^2$ and $D_2$ denote the Pearson $\chi^2$ divergence and the order-2 R\'enyi divergence, respectively. This penalty discourages excessive mass on a few temporal positions without enforcing uniform occupancy. It controls concentration but does not identify action locations. More information about the concentration properties can be found in Appendix~\ref{app:revth:support}.

\paragraph{Factorized Inference.}
For fixed temporal occupancy, the conditional minimization over classes and prototypes admits a Gibbs solution. Define
\begin{equation}
\mathcal Z_{it}^{\mathcal A}=\sum_{c\in\mathcal A,k}e^{-C_{ickt}/\varepsilon},\qquad
\rho_{ickt}^{\mathcal A}=\frac{e^{-C_{ickt}/\varepsilon}}{\mathcal Z_{it}^{\mathcal A}},\qquad
f_{it}^{\mathcal A}=-\varepsilon\log\mathcal Z_{it}^{\mathcal A}.
\label{eq:gibbs_factorization}
\end{equation}
Substituting $\Pi_{ickt}=m_{it}\rho_{ickt}^{\mathcal A}$ into Eq.~(\ref{eq:occupancy_transport}) and dropping terms independent of $\mathbf m_i$ gives
\begin{equation}
\mathbf m_i^\star=\arg\min_{\mathbf m_i\in\Delta_{n_i}}
\left\{\langle\mathbf m_i,\mathbf f_i^{\mathcal A}\rangle
+\varepsilon D_{\rm KL}(\mathbf m_i\|\mathbf u_i)
+\frac\lambda2\left(n_i\|\mathbf m_i\|_2^2-1\right)\right\},
\label{eq:time_occupancy_problem}
\end{equation}
where $\Delta_{n_i}$ is the unit simplex in $\mathbb{R}^{n_i}$. The remaining problem is to optimize an $n_i$-dimensional occupancy vector, with its normalization multiplier determined via scalar root-finding. The optimal plan, full-space actor plan, and actor class marginal are, respectively,
\begin{equation}
\Pi_{ickt}^\star=m_{it}^\star\rho_{ickt}^{\mathcal A},\qquad
\boldsymbol\Pi_i^A=\mathcal T_{\varepsilon,\lambda}(\mathbf C_i;\mathcal Y),\qquad
q_{ic}^A=\sum_{k,t}\Pi_{ickt}^A.
\label{eq:final_transport_plan}
\end{equation}

\paragraph{Occupancy Solver.}
For $\lambda>0$, the occupancy can be expressed via the normalization multiplier $\xi_i$:  \looseness=-1
\begin{equation}
 m_{it}(\xi_i)=\frac{\varepsilon}{\lambda n_i}
 W\!\left(\frac{\lambda}{\varepsilon}
 e^{(\xi_i-f_{it}^{\mathcal A})/\varepsilon}\right),\qquad
 g_i(\xi_i)=\sum_t m_{it}(\xi_i)-1,
\label{eq:occupancy_lambert}
\end{equation}
where $W$ is the principal Lambert-$W$ function, defined by $W(x)e^{W(x)}=x$. We choose the multiplier to satisfy $g_i(\xi_i)=0$. The described solver initializes $\xi_i^{(0)}=-\varepsilon\log\sum_t u_{it}\exp(-f_{it}^{\mathcal A}/\varepsilon)$ and applies Newton updates:
\begin{equation}
    \xi_i^{(r+1)}=\xi_i^{(r)} - \frac{g_i(\xi_i^{(r)})}{\displaystyle\sum_t\frac{m_{it}(\xi_i^{(r)})}{\varepsilon+\lambda n_i m_{it}(\xi_i^{(r)})}}.
\label{eq:occupancy_newton}
\end{equation}
When $\lambda=0$, the solution instead takes the form
$m_{it}^\star=\exp(-f_{it}^{\mathcal A}/\varepsilon)/\sum_s\exp(-f_{is}^{\mathcal A}/\varepsilon)$.
This case requires a separate branch as Eq.~(\ref{eq:occupancy_lambert}) contains $1/\lambda$, and entropy regularization remains active. Appendix~\ref{app:revth:occupancy} derives these expressions and establishes the uniqueness of the positive optimum.

We compute $f_{it}^{\mathcal A}$ using log-sum-exp and evaluate the conditional assignments in the log domain. For $\lambda>0$, the Lambert-$W$ term is evaluated directly from its log argument: letting $h_{it}=\log(\lambda/\varepsilon)+(\xi_i-f_{it}^{\mathcal A})/\varepsilon$, we solve $w_{it}+\log w_{it}=h_{it}$ for $w_{it}=W(e^{h_{it}})$ by Newton iteration, avoiding explicit construction of $e^{h_{it}}$. We then update the scalar normalization multiplier by Newton iteration and recover $m_{it}=\varepsilon w_{it}/(\lambda n_i)$. The occupancy solve is performed in double precision, and we monitor both its normalization and KKT stationarity residuals. For $\lambda=0$, we directly use the closed-form softmax solution.

\paragraph{Class Belief Update.}\label{sec:proposal}
During training, we construct a proposal on the candidate label set $S_i$ using the full-space actor's current costs and temperature: 
\begin{equation}
\widetilde{\boldsymbol\Pi}_i=\mathcal T_{\varepsilon,\lambda}(\sg(\mathbf C_i);S_i),\qquad
\widetilde q_{ic}=\sum_{k,t}\widetilde\Pi_{ickt},\qquad
\widetilde m_{it}=\sum_{c\in S_i,k}\widetilde\Pi_{ickt}.
\label{eq:proposal_transport}
\end{equation}
Here, $\sg$ denotes stop-gradient. We recompute the partition functions and free energies over candidate classes and apply the same occupancy solver to obtain $\widetilde{\mathbf m}_i$. Therefore, this restriction determines class support and can change temporal allocation.

Re-solving generally differs from globally masking and renormalizing the actor. Global masking also changes temporal mass, but it need not optimize the restricted occupancy-regularized objective. Appendix~\ref{app:revth:mask} gives the precise equivalence conditions, including the $\lambda=0$ boundary. The method uses re-solving rather than a fixed-time conditional renormalization. Initialize $q_{ic}^{(0)}=\mathbf1[c\in S_i]/|S_i|$. At epoch $e$, combine the stored belief with the current proposal marginal: 
\begin{equation}
q_{ic}^{(e)}=\mu q_{ic}^{(e-1)}+(1-\mu)\widetilde q_{ic},\qquad \mu\in[0,1),\ c\in S_i,
\label{eq:plan_momentum_update}
\end{equation}
and set $q_{ic}^{(e)}=0$ for $c\notin S_i$. The EMA averages class marginals, not full plans or network parameters. Each training bag is visited once per epoch. At its visit in epoch $e$, the stored $q_i^{(e-1)}$ is combined with the current detached proposal marginal to form $q_i^{(e)}$; this refined belief is used immediately for teacher construction and is stored after the optimizer step for the next epoch. 

\subsection{Candidate-Restricted Teacher Construction}\label{sec:teacher}
The teacher incorporates historical class mass while retaining the proposal's current temporal occupancy. We obtain it through the KL projection on the candidate support:
\begin{equation}
\begin{aligned}
\boldsymbol\Pi_i^T&=\arg\min_{\boldsymbol\Pi\geq0}
D_{\rm KL}(\boldsymbol\Pi\|\widetilde{\boldsymbol\Pi}_i),\\
\text{subject to}\quad
\sum_{k,t}\Pi_{ickt}&=q_{ic}^{(e)}\quad(c\in S_i),\qquad
\sum_{c\in S_i,k}\Pi_{ickt}=\widetilde m_{it}\quad(t=1,\ldots,n_i).
\end{aligned}\label{eq:teacher_projection}
\end{equation}
Let $\widetilde M_{ict}=\sum_k\widetilde\Pi_{ickt}$ denote the proposal's class--time matrix. The teacher has the multiplicative form $\Pi_{ickt}^T=\alpha_{ic}\widetilde\Pi_{ickt}\beta_{it}$. Starting from positive scaling factors, we alternate the updates
\begin{equation}
 \alpha_{ic}\leftarrow\frac{q_{ic}^{(e)}}{\sum_t\widetilde M_{ict}\beta_{it}},\qquad
 \beta_{it}\leftarrow\frac{\widetilde m_{it}}{\sum_{c\in S_i}\widetilde M_{ict}\alpha_{ic}}.
 \label{eq:teacher_scaling}
\end{equation}
Each update enforces one target marginal; both constraints hold at convergence.
We perform the marginal scaling in the log domain using log-sum-exp denominators.
Computing the scaling factors requires only the $|S_i|\times n_i$ class--time
matrix. The full teacher is then recovered by multiplying each proposal entry
by $\alpha_{ic}\beta_{it}$, with the same factors applied across prototypes.

We extend the teacher with zeros outside $S_i$ to obtain $\widehat{\boldsymbol{\Pi}}_i^T$. The training objective is
\begin{equation}
\mathcal L=\frac{1}{N_{\mathrm{train}}}\sum_i\mathcal L_i^{\mathrm{plan}},\qquad
\mathcal L_i^{\mathrm{plan}}=
D_{\mathrm{KL}}\!\left(
\operatorname{sg}(\widehat{\boldsymbol{\Pi}}_i^T)
\Vert
\boldsymbol{\Pi}_i^A
\right).
\label{eq:plan_loss}
\end{equation}
Under the same-cost construction and exact teacher projection, the prototype-conditional distributions of the actor and teacher coincide on the candidate support. Therefore, the plan-level objective reduces to
\begin{equation}
\mathcal L_i^{\mathrm{plan}}
=
D_{\mathrm{KL}}(\mathbf q_i^T\Vert\mathbf q_i^A)
+
\sum\nolimits_{c:q_{ic}^T>0}q_{ic}^T
D_{\mathrm{KL}}\!\left(
P_i^T(t\mid c)\Vert P_i^A(t\mid c)
\right),
\label{eq:plan_decomposition}
\end{equation}
where $q_{ic}^{T}$ and $q_{ic}^{A}$ are the teacher and actor class marginals, respectively, and $P_i^{T}(t\mid c)$ and $P_i^{A}(t\mid c)$ denote their class-conditional temporal distributions obtained after marginalizing over the prototype dimension. Thus, beyond class-marginal alignment, the joint objective explicitly supervises the class-conditional temporal allocation. A complete derivation and the corresponding current-point gradient equivalence are provided in Appendix~\ref{app:revth:kl}.

The loss uses the full-space actor without renormalizing it over $S_i$, thereby penalizing mass assigned to non-candidate classes. Detaching the entire target-construction pipeline gives
\begin{equation}
 \nabla_\theta\mathcal L_i^{\rm plan}
 =-\sum\nolimits_{c\in S_i,k,t}\Pi_{ickt}^T\nabla_\theta\log\Pi_{ickt}^A.
 \label{eq:plan_gradient}
\end{equation}
The actor gradient propagates through both temporal occupancy and Gibbs conditional assignments to the matching costs, prototypes, and temporal encoder. We differentiate through the unrolled actor solver, and no gradients pass through proposal inference, EMA updates, or teacher projection.

The resulting supervision acts on class--time associations parameterized by multiple prototypes. Under the stated same-cost construction, the actor and teacher have identical conditional distributions over prototypes given class and time on the candidate support. The prototype-conditional KL term is therefore zero, although prototypes still affect the class--time distribution and can receive gradients through it. The full KL decomposition, the corresponding gradient equivalence at the teacher construction point, and boundary cases are established in Appendix~\ref{app:revth:kl}--\ref{app:revth:degeneracy}.

\subsection{Joint Class–Time Learning and Prediction}
Algorithm~\ref{alg:training} summarizes the per-bag training procedure. Training uses one complete video bag per optimization step: we compute one plan-level loss, perform one backward pass and gradient-clipped optimizer update, and then write the detached refined class belief back to the training dataset. Since bags are processed individually, no temporal padding is introduced. At test time, we compute only $\mathbf Z_i$, the matching costs, and the full-space actor, and predict $\hat y_i=\arg\max_{c\in\mathcal Y}\sum_{k,t}\Pi^A_{ickt}$.

\begin{figure}[t]
\centering
\begin{minipage}{\linewidth}
\hrule\smallskip
\refstepcounter{algorithm}\textbf{Algorithm \thealgorithm: {\ours} training and prediction}\label{alg:training}\par\smallskip
\begin{tabular}{@{}r@{\quad}p{0.92\linewidth}@{}}
1 & Initialize parameters and $q_{ic}^{(0)}=\mathbf1[c\in S_i]/|S_i|$.\\
2 & For each epoch $e$ and each training bag $(X_i,S_i)$:\\
3 & \quad Encode valid ordered positions and compute $\mathbf C_i$.\\
4 & \quad Solve full-space actor $\boldsymbol\Pi_i^A=\mathcal T(\mathbf C_i;\mathcal Y)$.\\
5 & \quad Solve proposal $\widetilde{\boldsymbol\Pi}_i=\mathcal T(\sg(\mathbf C_i);S_i)$.\\
6 & \quad Update class belief using Eq.~(\ref{eq:plan_momentum_update}).\\
7 & \quad Scale the proposal to $(\mathbf q_i^{(e)},\widetilde{\mathbf m}_i)$ using Eq.~(\ref{eq:teacher_scaling}).\\
8 & \quad Extend teacher by zero outside $S_i$ and detach it.\\
9 & \quad Backpropagate the actor KL, clip the global gradient norm, step the optimizer, and store the detached refined belief.\\
10 & At test time, return $\arg\max_c\sum_{k,t}\Pi_{ickt}^A$ without candidates.\\
\end{tabular}\par\smallskip
\smallskip\hrule
\end{minipage}
\vspace{-10pt}
\end{figure}

\section{Experiments}\label{Experiments}
\subsection{Experimental Setup}
\paragraph{Benchmark Construction}
We construct three VCMIPL benchmarks, i.e., Breakfast-MIPL, DoTA-MIPL, and FineAction-MIPL, from the Breakfast~\citep{kuehne2014breakfast}, DoTA~\citep{yao2023dota}, and FineAction~\citep{liu2022fineaction} datasets, respectively. For FineAction, we retain single-label videos and exclude classes with fewer than 50 videos. The resulting subsets contain 1,989 Breakfast videos (10 classes), 4,577 DoTA videos (9 classes), and 10,675 FineAction videos (78 classes). Appendix~\ref{app:protocol} reports candidate-set and feature statistics. We use ResNet~\citep{he2016resnet} and DINOv3~\citep{simeoni2025dinov3} for frame-level features, and SlowFast~\citep{feichtenhofer2019slowfast} and VideoMAEv2~\citep{wang2023videomaev2} for clip-level features.
To reduce the annotation cost, we generate candidate labels using a VideoMAEv2 recognizer equipped with a TimeSformerHead in MMAction2~\citep{2020mmaction2}. We fine-tune the recognizer on four folds and generate predictions for the held-out fold, repeating this procedure for all five folds. Fine-tuning uses 30 epochs on DoTA and 100 epochs on both Breakfast and FineAction. Following \citet{zhang2024candidate}, we first select labels for each video in descending order of confidence until their cumulative confidence reaches $95\%$. A selected label is retained only if the video also ranks among the top $50\%$ of videos by confidence for that label.

\paragraph{Comparative Methods}
We compare {\ours} with seven MIPL baselines: \demipl~\citep{DEMIPL}, \elimipl~\citep{EliMIPL}, \miplma~\citep{MIPLMA}, \promipl~\citep{ProMIPL}, \psmipl~\citep{PSMIPL}, \dualg~\citep{dualg}, and \agopmipl~\citep{AGOPMIPL}. We exclude \miplgp~\citep{MIPLGP}, \fastmipl~\citep{FastMIPL}, and {\uamipl}~\citep{UAMIPL} because their released implementations require all instance-level features to be loaded into memory, making them impractical for these video benchmarks under the available memory budget.

\paragraph{Implementation Details}
{\ours} is implemented in PyTorch and trained for 40 epochs on an NVIDIA H100 GPU using AdamW with $(\beta_1,\beta_2)=(0.9,0.999)$ and zero weight decay. The learning rates are $5\times10^{-3}$ for the prototypes and $5\times10^{-4}$ for the temporal encoder. We evaluate {\ours} over the same five predefined folds used for cross-fitting the candidate-label recognizer. We report classification accuracy (ACC) and balanced accuracy (BACC) as mean $\pm$ standard deviation over five folds, where BACC is the mean recall over ground-truth classes represented in each test fold. More implementation details are provided in Appendix~\ref{app:implementation}. 

\begin{table}[!t]
\centering
\vspace{-10pt}
\caption{Classification accuracy on the benchmark datasets. Results are reported as (mean $\pm$ std \%). The best result in each column is bolded, and the second-best results are underlined. $\bullet$/$\circ$ indicates whether {\ours} is statistically superior/inferior to the corresponding method under a two-sided paired $t$-test at $p<0.05$.}
\label{tab:acc-comparison-transposed}
\small
\setlength{\tabcolsep}{8pt}
\begin{tabular}{llcccc}
\toprule
Dataset & Method & DINOv3 & VideoMAEv2 & ResNet & SlowFast\\
\midrule
\multirow{8}{*}{\textbf{\shortstack{Breakfast\\-MIPL}}}
& \demipl & $11.4\%\,{\scriptstyle\pm\,0.4\%}\,\bullet$ & $25.7\%\,{\scriptstyle\pm\,10.0\%}\,\bullet$ & $10.1\%\,{\scriptstyle\pm\,0.7\%}\,\bullet$ & $11.0\%\,{\scriptstyle\pm\,0.5\%}\,\bullet$\\
& \elimipl & $27.8\%\,{\scriptstyle\pm\,0.8\%}\,\bullet$ & $52.2\%\,{\scriptstyle\pm\,1.9\%}\,\bullet$ & $26.1\%\,{\scriptstyle\pm\,1.7\%}\,\bullet$ & $11.3\%\,{\scriptstyle\pm\,0.1\%}\,\bullet$\\
& \miplma & $33.9\%\,{\scriptstyle\pm\,1.5\%}\,\bullet$ & $63.6\%\,{\scriptstyle\pm\,2.1\%}\,\bullet$ & $30.5\%\,{\scriptstyle\pm\,2.4\%}\,\bullet$ & $11.3\%\,{\scriptstyle\pm\,0.1\%}\,\bullet$\\
& \promipl & $\underline{62.2\%}\,{\scriptstyle\pm\,1.2\%}\,\bullet$ & $83.8\%\,{\scriptstyle\pm\,3.4\%}\,\bullet$ & $\underline{43.7\%}\,{\scriptstyle\pm\,1.2\%}\,\bullet$ & $41.6\%\,{\scriptstyle\pm\,1.1\%}\,\bullet$\\
& \psmipl & $47.3\%\,{\scriptstyle\pm\,5.0\%}\,\bullet$ & $70.3\%\,{\scriptstyle\pm\,7.3\%}\,\bullet$ & $11.0\%\,{\scriptstyle\pm\,1.4\%}\,\bullet$ & $33.0\%\,{\scriptstyle\pm\,0.6\%}\,\bullet$\\
& \dualg & $45.7\%\,{\scriptstyle\pm\,3.2\%}\,\bullet$ & $64.9\%\,{\scriptstyle\pm\,4.4\%}\,\bullet$ & $34.3\%\,{\scriptstyle\pm\,2.0\%}\,\bullet$ & $14.4\%\,{\scriptstyle\pm\,1.6\%}\,\bullet$\\
& \agopmipl & $46.3\%\,{\scriptstyle\pm\,4.2\%}\,\bullet$ & $\mathbf{87.6\%}\,{\scriptstyle\pm\,4.3\%}$ & $33.3\%\,{\scriptstyle\pm\,3.1\%}\,\bullet$ & $\underline{50.4\%}\,{\scriptstyle\pm\,2.5\%}\,\bullet$\\
& {\ours} & $\mathbf{66.3\%}\,{\scriptstyle\pm\,1.0\%}$ & $\underline{86.1\%}\,{\scriptstyle\pm\,4.5\%}$ & $\mathbf{52.3\%}\,{\scriptstyle\pm\,3.4\%}$ & $\mathbf{55.9\%}\,{\scriptstyle\pm\,1.8\%}$\\
\midrule
\multirow{8}{*}{\textbf{\shortstack{DoTA\\-MIPL}}}
& \demipl & $45.6\%\,{\scriptstyle\pm\,1.3\%}\,\bullet$ & $68.1\%\,{\scriptstyle\pm\,0.7\%}\,\bullet$ & $42.6\%\,{\scriptstyle\pm\,0.4\%}\,\bullet$ & $41.2\%\,{\scriptstyle\pm\,1.6\%}\,\bullet$\\
& \elimipl & $45.8\%\,{\scriptstyle\pm\,1.6\%}\,\bullet$ & $68.3\%\,{\scriptstyle\pm\,1.0\%}\,\bullet$ & $43.4\%\,{\scriptstyle\pm\,0.5\%}\,\bullet$ & $41.7\%\,{\scriptstyle\pm\,1.0\%}\,\bullet$\\
& \miplma & $46.8\%\,{\scriptstyle\pm\,1.1\%}\,\bullet$ & $69.2\%\,{\scriptstyle\pm\,0.9\%}\,\bullet$ & $43.7\%\,{\scriptstyle\pm\,1.0\%}\,\bullet$ & $41.8\%\,{\scriptstyle\pm\,1.1\%}\,\bullet$\\
& \promipl & $49.2\%\,{\scriptstyle\pm\,1.8\%}\,\bullet$ & $72.5\%\,{\scriptstyle\pm\,1.5\%}$ & $\underline{45.9\%}\,{\scriptstyle\pm\,2.2\%}\,\bullet$ & $\underline{48.0\%}\,{\scriptstyle\pm\,0.4\%}$\\
& \psmipl & $\underline{49.8\%}\,{\scriptstyle\pm\,0.9\%}\,\bullet$ & $\underline{74.1\%}\,{\scriptstyle\pm\,0.9\%}$ & $44.6\%\,{\scriptstyle\pm\,1.3\%}\,\bullet$ & $26.0\%\,{\scriptstyle\pm\,13.0\%}\,\bullet$\\
& \dualg & $46.1\%\,{\scriptstyle\pm\,2.6\%}\,\bullet$ & $69.6\%\,{\scriptstyle\pm\,0.8\%}\,\bullet$ & $44.3\%\,{\scriptstyle\pm\,1.2\%}\,\bullet$ & $43.6\%\,{\scriptstyle\pm\,0.9\%}\,\bullet$\\
& \agopmipl & $49.7\%\,{\scriptstyle\pm\,1.3\%}$ & $69.9\%\,{\scriptstyle\pm\,1.8\%}\,\bullet$ & $45.2\%\,{\scriptstyle\pm\,0.7\%}\,\bullet$ & $47.0\%\,{\scriptstyle\pm\,0.1\%}\,\bullet$\\
& {\ours} & $\mathbf{52.7\%}\,{\scriptstyle\pm\,2.0\%}$ & $\mathbf{74.7\%}\,{\scriptstyle\pm\,1.8\%}$ & $\mathbf{49.6\%}\,{\scriptstyle\pm\,1.2\%}$ & $\mathbf{48.8\%}\,{\scriptstyle\pm\,0.6\%}$\\
\midrule
\multirow{8}{*}{\textbf{\shortstack{FineAction\\-MIPL}}}
& \demipl & $41.5\%\,{\scriptstyle\pm\,1.6\%}\,\bullet$ & $52.1\%\,{\scriptstyle\pm\,1.6\%}\,\bullet$ & $21.7\%\,{\scriptstyle\pm\,1.8\%}\,\bullet$ & $50.2\%\,{\scriptstyle\pm\,2.1\%}\,\bullet$\\
& \elimipl & $42.0\%\,{\scriptstyle\pm\,1.0\%}\,\bullet$ & $49.6\%\,{\scriptstyle\pm\,2.7\%}\,\bullet$ & $22.9\%\,{\scriptstyle\pm\,2.0\%}\,\bullet$ & $49.3\%\,{\scriptstyle\pm\,1.7\%}\,\bullet$\\
& \miplma & $55.4\%\,{\scriptstyle\pm\,1.4\%}\,\bullet$ & $68.3\%\,{\scriptstyle\pm\,0.5\%}\,\bullet$ & $35.7\%\,{\scriptstyle\pm\,3.0\%}\,\bullet$ & $57.2\%\,{\scriptstyle\pm\,2.3\%}\,\bullet$\\
& \promipl & $80.7\%\,{\scriptstyle\pm\,1.5\%}\,\bullet$ & $92.0\%\,{\scriptstyle\pm\,3.4\%}$ & $69.7\%\,{\scriptstyle\pm\,1.4\%}\,\bullet$ & $82.4\%\,{\scriptstyle\pm\,2.2\%}\,\bullet$\\
& \psmipl & $60.5\%\,{\scriptstyle\pm\,1.4\%}\,\bullet$ & $66.2\%\,{\scriptstyle\pm\,1.5\%}\,\bullet$ & $43.0\%\,{\scriptstyle\pm\,2.0\%}\,\bullet$ & $32.7\%\,{\scriptstyle\pm\,14.9\%}\,\bullet$\\
& \dualg & $58.1\%\,{\scriptstyle\pm\,1.4\%}\,\bullet$ & $74.2\%\,{\scriptstyle\pm\,0.8\%}\,\bullet$ & $46.4\%\,{\scriptstyle\pm\,1.3\%}\,\bullet$ & $63.9\%\,{\scriptstyle\pm\,0.8\%}\,\bullet$\\
& \agopmipl & $\underline{82.2\%}\,{\scriptstyle\pm\,1.3\%}\,\bullet$ & $\mathbf{96.3\%}\,{\scriptstyle\pm\,0.5\%}\,\circ$ & $\underline{71.1\%}\,{\scriptstyle\pm\,1.0\%}\,\bullet$ & $\underline{84.2\%}\,{\scriptstyle\pm\,2.2\%}\,\bullet$\\
& {\ours} & $\mathbf{88.5\%}\,{\scriptstyle\pm\,1.0\%}$ & $\underline{94.0\%}\,{\scriptstyle\pm\,1.4\%}$ & $\mathbf{76.2\%}\,{\scriptstyle\pm\,1.2\%}$ & $\mathbf{88.3\%}\,{\scriptstyle\pm\,1.0\%}$\\
\bottomrule
\end{tabular}
\end{table}

\subsection{Main Experimental Results}
Table~\ref{tab:acc-comparison-transposed} shows that {\ours} achieves the highest mean ACC in 10 of the 12 dataset--feature configurations, with an average gain of 3.16\% over the strongest baseline in each configuration. Beyond the overall improvement, the performance pattern supports the central motivation of our framework. Most competing MIPL methods ultimately resolve ambiguity through bag-level class supervision, without explicitly retaining which temporal evidence supports each candidate class. In contrast, {\ours} preserves this class--time association throughout disambiguation, so that the plan-level objective constrains both the class belief and its supporting temporal allocation. This distinction is particularly evident with the frame-level DINOv3 and ResNet representations, where {\ours} consistently ranks first across all three datasets. These representations leave temporal dependencies largely unresolved before MIPL learning, making explicit evidence allocation especially useful for separating informative instances from background or class-irrelevant content. The main exceptions occur in VideoMAEv2, where strong baselines already achieve highly competitive performance. A plausible explanation is that its clip-level representations already encode substantial local spatiotemporal context, reducing the remaining ambiguity that can be resolved by downstream temporal restructuring. Nevertheless, {\ours} is inferior to {\agopmipl} in only one configuration while being significantly superior to the strongest competing method in eight, suggesting that its advantage primarily stems from coupling label disambiguation with temporal evidence allocation rather than from a particular feature backbone.

\vspace{-5pt}
\subsection{Ablation Study}\label{app:ablation}
We evaluate six ablation variants across all 12 dataset--feature configurations using Fold~1 ACC at epoch~40. Table~\ref{tab:pivotmipl-ablation-fold1} presents the individual results. Moreover, a summary of the ablations across twelve dataset--feature configurations is provided in Appendix~\ref{Ablation summaries}. \textbf{A1} removes occupancy regularization by setting $\lambda=0$, while retaining entropy regularization and learned temporal occupancy. \textbf{A2} fixes temporal occupancy to the uniform distribution, $m_{it}=1/n_i$. \textbf{A3} replaces the joint KL objective with a class-level KL objective. \textbf{A4} replaces the dual-marginal teacher projection with class-marginal-only rescaling, removing the constraint that preserves the proposal's temporal occupancy. \textbf{A5} uses a single prototype per class, i.e., $K=1$. \textbf{A6} removes the temporal encoder.

Table~\ref{tab:pivotmipl-ablation-fold1} shows that every ablation reduces mean ACC across the 12 configurations. Notably, A1 and A2 reduce ACC in 10 and 9 configurations, favoring regularized yet adaptive temporal allocation. A3 and A4 degrade performance across all configurations, even with explicit class--time supervision and occupancy-preserving teacher construction. A5 and A6 each reduce ACC in 11 configurations, supporting multiple prototypes and temporal contextualization. Temporal encoding has the largest average impact, while the joint objective and dual-marginal teacher deliver the most consistent gains. 

Appendix~\ref{BACC} reports the BACC results, with {\ours} achieving the highest mean performance in $10$ of the $12$ configurations. To assess the role of temporal encoding, In Appendix~\ref{te}, we compare {\ours} with {\agopmipl}, for which we adopt the same temporal encoder as {\ours}. {\ours} outperforms this baseline in $10$ of the $12$ configurations, suggesting that the encoder alone does not account for the observed gains. Appendix~\ref{Efficiency analysis} reports favorable trade-offs between accuracy and computational efficiency across most evaluated settings. Appendix~\ref{sensitivity analysis} shows limited performance variation over the tested ranges of the main transport hyperparameters.

\begin{table}[!t]
\centering
\vspace{-10pt}
\caption{Ablation study of \ours.}
\label{tab:pivotmipl-ablation-fold1}
\setlength{\tabcolsep}{4pt}   
\begin{tabular}{llccccccc}
\toprule
Dataset & Feature & {\ours} & A1 & A2 & A3 & A4 & A5 & A6\\
\midrule
\multirow{4}{*}{\textbf{\shortstack{Breakfast\\-MIPL}}}
& DINOv3 & $\underline{66.4\%}$ & $62.6\%$ & $\mathbf{72.1\%}$ & $66.1\%$ & $64.8\%$ & $64.3\%$ & $57.1\%$\\
& VideoMAEv2 & $\underline{77.4\%}$ & $\mathbf{77.8\%}$ & $77.1\%$ & $70.3\%$ & $67.1\%$ & $72.6\%$ & $75.6\%$\\
& ResNet & $\underline{52.2\%}$ & $42.6\%$ & $\mathbf{54.4\%}$ & $51.4\%$ & $50.3\%$ & $47.9\%$ & $35.4\%$\\
& SlowFast & $\mathbf{53.9\%}$ & $48.4\%$ & $53.4\%$ & $52.6\%$ & $51.6\%$ & $\underline{53.6\%}$ & $49.9\%$\\
\midrule
\multirow{4}{*}{\textbf{\shortstack{DoTA\\-MIPL}}}
& DINOv3 & $\mathbf{54.3\%}$ & $50.3\%$ & $51.7\%$ & $\underline{53.5\%}$ & $52.6\%$ & $51.3\%$ & $49.3\%$\\
& VideoMAEv2 & $\underline{72.8\%}$ & $72.0\%$ & $70.9\%$ & $72.2\%$ & $72.7\%$ & $\mathbf{73.0\%}$ & $72.7\%$\\
& ResNet & $\mathbf{50.9\%}$ & $48.1\%$ & $49.8\%$ & $\underline{50.5\%}$ & $50.4\%$ & $47.2\%$ & $47.5\%$\\
& SlowFast & $\mathbf{48.4\%}$ & $46.4\%$ & $\underline{47.8\%}$ & $45.9\%$ & $47.1\%$ & $46.5\%$ & $43.9\%$\\
\midrule
\multirow{4}{*}{\textbf{\shortstack{FineAction\\-MIPL}}}
& DINOv3 & $\mathbf{90.0\%}$ & $\underline{89.7\%}$ & $86.3\%$ & $79.0\%$ & $86.3\%$ & $85.7\%$ & $89.4\%$\\
& VideoMAEv2 & $92.4\%$ & $93.7\%$ & $\underline{96.3\%}$ & $88.7\%$ & $89.1\%$ & $91.1\%$ & $\mathbf{96.7\%}$\\
& ResNet & $\mathbf{78.1\%}$ & $75.5\%$ & $71.9\%$ & $72.6\%$ & $\underline{77.6\%}$ & $75.4\%$ & $75.8\%$\\
& SlowFast & $\mathbf{90.0\%}$ & $80.7\%$ & $86.5\%$ & $84.9\%$ & $86.9\%$ & $\underline{88.0\%}$ & $64.3\%$\\
\bottomrule
\end{tabular}
\end{table}

\vspace{-5pt}
\section{Conclusion}\label{Conclusion}
We presented {\ours}, a structured framework coupling candidate-label disambiguation with temporal evidence allocation for video multi-instance partial-label learning. Occupancy-regularized spherical transport matches contextualized features to multiple prototypes per class, learning nonuniform temporal assignments while discouraging excessive concentration. Candidate-restricted inference and dual-marginal KL projection construct a teacher that integrates momentum-refined class beliefs while preserving the proposal's temporal occupancy. A single plan-level KL objective aligns the full-space predictor with this teacher. Our analysis characterizes when candidate re-solving differs from masking and establishes, under the proposed construction, the objective's decomposition into class-marginal alignment and class-conditional temporal supervision. Across Breakfast-MIPL, DoTA-MIPL, and FineAction-MIPL with four feature representations, {\ours} achieves the highest mean ACC and BACC in 10 of 12 configurations. Ablations support the joint objective and dual-marginal teacher, while encoder-matched comparisons suggest gains beyond temporal encoding. These findings support resolving label ambiguity through explicit class--time associations rather than class probabilities alone.

\bibliography{2027_conference}
\bibliographystyle{2027_conference}

\clearpage
\appendix
\section{Structural Properties and Proofs}
\label{app:revth:all}

We analyze the optimization problems defined in the paper, rather than the
convergence of network training. Throughout this appendix, one bag has $n\geq1$
valid temporal positions, finite costs $C_{ckt}$, $K\geq1$ prototypes per class,
and a nonempty admissible class set $\mathcal A$. We fix $\varepsilon>0$,
$\lambda\geq0$, and $u_t=1/n$. Padding positions are excluded from this
domain. All plans below are exact optima unless numerical approximations are
explicitly discussed. Derivative statements additionally assume that the actor is differentiable at the parameter point under consideration. These properties do not imply recovery of the true label,
correct temporal localization, or generalization of the learned network.

\subsection{Factorization and Uniqueness}
\label{app:revth:occupancy}

Write $\Pi_{ckt}=m_t\rho_{ckt}$, where $\sum_t m_t=1$ and
$\sum_{c\in\mathcal A,k}\rho_{ckt}=1$. For any positive $m_t$, conditional
minimization over $\rho_{\cdot\cdot t}$ gives
\begin{equation}
 Z_t^{\mathcal A}=\sum_{c\in\mathcal A,k}e^{-C_{ckt}/\varepsilon},\qquad
 \rho_{ckt}^{\mathcal A}=\frac{e^{-C_{ckt}/\varepsilon}}{Z_t^{\mathcal A}},
 \qquad f_t^{\mathcal A}=-\varepsilon\log Z_t^{\mathcal A}.
 \label{eq:revth:conditional}
\end{equation}
Indeed, its cost plus conditional entropy equals
$f_t^{\mathcal A}+\varepsilon D_{\mathrm{KL}}
(\rho_{\cdot\cdot t}\|\rho^{\mathcal A}_{\cdot\cdot t})$.
The full objective therefore reduces, up to a constant independent of $m$, to
\begin{equation}
 G_\lambda(m)=\langle f,m\rangle+
 \varepsilon\sum_t m_t\log(m_t/u_t)
 +\frac{\lambda}{2}R(m),\qquad
 R(m)=n\sum_t m_t^2-1,\qquad m\in\Delta_n.
 \label{eq:revth:reduced}
\end{equation}
This is an optimization over an $n$-dimensional vector subject to one simplex
constraint; only its normalization multiplier is a scalar root variable.

\begin{proposition}[Unique Interior Optimum]
\label{prop:unique-interior-optimum}
For finite $f$, the objective in Eq.~(\ref{eq:revth:reduced}) has a unique minimizer
$m^\star$ with $m_t^\star>0$ for every valid $t$. Its necessary and sufficient
optimality conditions are
\begin{equation}
 f_t+\varepsilon\log(m_t^\star/u_t)+\lambda n m_t^\star=\xi,
 \qquad \sum_t m_t^\star=1.
 \label{eq:revth:kkt}
\end{equation}
\end{proposition}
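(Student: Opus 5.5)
The proof rests on three facts about $G_\lambda$ on the compact convex set $\Delta_n$: it is strictly convex, every minimizer is interior, and the KKT system in Eq.~(\ref{eq:revth:kkt}) is therefore both necessary and sufficient.

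\emph{Existence and uniqueness.} Write $G_\lambda$ as a sum of three terms.
\begin{itemize}
\item The term $\langle f,m\rangle$ is linear.
\item The term $\varepsilon\sum_t m_t\log(m_t/u_t)$ is strictly convex and continuous on $\Delta_n$, using the convention $0\log 0=0$, because $\varepsilon>0$.
\item The term $\frac{\lambda}{2}(n\|m\|_2^2-1)$ is convex because $\lambda\ge0$.
\end{itemize}
Hence $G_\lambda$ is continuous and strictly convex on the compact convex set $\Delta_n$. By Weierstrass a minimizer exists, and strict convexity makes it unique.

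\emph{Interiority.} This is the main step, because the derivative of the entropy term blows up at the boundary. Suppose a minimizer $m$ has $m_s=0$ for some $s$. Pick an index $r$ with $m_r>0$ and perturb to $m^\delta=m+\delta(e_s-e_r)$ for small $\delta>0$. The one-sided derivative of $G_\lambda$ along this direction at $\delta=0^+$ contains the term $\varepsilon\log(\delta/u_s)\to-\infty$. All other contributions stay bounded:
\begin{itemize}
\item the linear part contributes $f_s-f_r$;
\item coordinate $r$ contributes $-\varepsilon(\log(m_r/u_r)+1)$;
\item the quadratic part contributes $\lambda n(m_s-m_r)$.
\end{itemize}
So $G_\lambda(m^\delta)<G_\lambda(m)$ for small $\delta$, which contradicts optimality. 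I would make this rigorous through the difference quotient $[G_\lambda(m^\delta)-G_\lambda(m)]/\delta$: the new-coordinate contribution is $\varepsilon(\log(\delta/u_s)-1)$, which tends to $-\infty$, while the remaining parts converge to finite limits. Therefore $m_t^\star>0$ for every valid $t$.

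\emph{KKT conditions.} Since $m^\star$ lies in the relative interior, the only active constraint is the affine equality $\sum_t m_t=1$. Lagrange's condition gives $\partial_t G_\lambda(m^\star)=\xi'$, a common constant across $t$. Computing the partial derivative gives
\[
f_t+\varepsilon(\log(m_t^\star/u_t)+1)+\lambda n m_t^\star=\xi'.
\]
Absorbing the constant $\varepsilon$ into the multiplier, $\xi=\xi'-\varepsilon$, yields Eq.~(\ref{eq:revth:kkt}). This proves necessity.

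For sufficiency, suppose a positive $m$ satisfies the system. Then $\nabla G_\lambda(m)=(\xi+\varepsilon)\mathbf 1$, so for every $m'\in\Delta_n$ the first-order term vanishes: $\langle\nabla G_\lambda(m),m'-m\rangle=0$. By convexity, $G_\lambda(m')\ge G_\lambda(m)$. This also holds for boundary $m'$, because $G_\lambda$ is convex and continuous on the closed simplex; one can approach $m'$ along the segment from $m$. Hence $m$ is the unique minimizer.

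As a consistency check, for each fixed $\xi$ the left-hand side of the stationarity equation is strictly increasing in $m_t^\star$. Each $m_t^\star$ is therefore uniquely determined by $\xi$: when $\lambda>0$ it is given by the Lambert-$W$ formula in Eq.~(\ref{eq:occupancy_lambert}), and when $\lambda=0$ it is the softmax. The map $\xi\mapsto\sum_t m_t(\xi)$ is strictly increasing and continuous from $0$ to $\infty$, so exactly one $\xi$ satisfies normalization. This agrees with the uniqueness argument above.
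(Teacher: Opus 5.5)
Your proposal is correct and follows essentially the same route as the paper: strict convexity plus compactness for existence and uniqueness, a $\delta$-mass transfer to a zero coordinate whose $\varepsilon\delta\log\delta$ term forces interiority, and Lagrange stationarity with the entropy constant absorbed into $\xi$, with convexity giving sufficiency. One harmless slip: the new coordinate's difference quotient is $\varepsilon\log(\delta/u_s)$, without the $-1$, because the reduced objective uses $m_t\log(m_t/u_t)$ rather than $m_t(\log m_t-1)$; it still tends to $-\infty$, so the argument is unaffected.
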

\begin{proof}
The continuous extension of $x\log(x/u)$ at zero is strictly convex on
$[0,\infty)$. For positive arguments, this follows from its second derivative
$1/x$; if one argument is zero, for $y>0$ and $a\in(0,1)$,
$(ay)\log(ay/u)=a y\log(y/u)+ay\log a<a y\log(y/u)$.
Thus, the entropy is strictly convex on the whole simplex. The other terms are
linear or convex, and compactness gives existence and uniqueness. A boundary
point with $m_t=0$ can be perturbed by transferring $\delta>0$ mass from a
positive component to $t$. Its objective difference is
$\varepsilon\delta\log\delta+O(\delta)<0$ for small $\delta$, so it cannot
minimize the objective. At the interior optimum, differentiating the
Lagrangian and absorbing its common entropy constant into $\xi$ gives
Eq.~(\ref{eq:revth:kkt}). Strict convexity makes these conditions sufficient.
For $n=1$, the unique feasible point is already positive.
\end{proof}
For $\lambda>0$, the principal Lambert-$W$ function yields
\begin{equation}
 m_t(\xi)=\frac{\varepsilon}{\lambda n}
 W\!\left(\frac{\lambda}{\varepsilon}
       e^{(\xi-f_t)/\varepsilon}\right),\qquad
 H(\xi)=\sum_t m_t(\xi)-1.
 \label{eq:revth:lambert}
\end{equation}
Implicit differentiation of Eq.~(\ref{eq:revth:kkt}) at fixed $f$ gives
\begin{equation}
 H'(\xi)=\sum_t\frac{m_t(\xi)}{\varepsilon+\lambda n m_t(\xi)}>0.
 \label{eq:revth:rootderivative}
\end{equation}
The sum in $H$ tends to zero as $\xi\to-\infty$ and to infinity as
$\xi\to\infty$, so the root is unique. It obeys the finite bracket
\begin{equation}
 \min_t f_t+\lambda\ \leq\ \xi^\star\ \leq\ \max_t f_t+\lambda.
 \label{eq:revth:bracket}
\end{equation}
To see this, the left side of Eq.~(\ref{eq:revth:kkt}), as a function of $m_t$,
is strictly increasing and equals $f_t+\lambda$ at $m_t=u_t$.
At the lower bracket, each $m_t(\xi)\leq u_t$, and at the upper bracket, each
$m_t(\xi)\geq u_t$. Eq.~(\ref{eq:revth:bracket}) also provides a valid bracket for a safeguarded Newton--bisection solver. Our implementation instead uses the fixed-iteration Newton procedure described in Appendix~\ref{app:implementation}, without bisection. If all $f_t$ are equal, the bracket is a single exact root.

The case $\lambda=0$ must be handled separately:
\begin{equation}
 m_t^\star=\frac{e^{-f_t/\varepsilon}}{\sum_s e^{-f_s/\varepsilon}},\qquad
 \xi^\star=-\varepsilon\log\sum_t u_t e^{-f_t/\varepsilon}.
 \label{eq:revth:gibbsoccupancy}
\end{equation}
It retains entropic smoothing. Substitution of zero into the expression with
$1/\lambda$ in Eq.~(\ref{eq:revth:lambert}) is invalid.

\subsection{Candidate-Restricted Re-solving and Global Masking}
\label{app:revth:mask}

Let $\Pi^A$ solve the problem over all classes, and let
$\widetilde\Pi$ solve it over a nonempty candidate set $S$, using exactly the
same $C,\varepsilon,\lambda,n,u$. Define the globally masked actor plan
and its retained conditional mass by
\begin{equation}
 a_t=\sum_{c\in S,k}\rho^A_{ckt}>0,\quad
 Q=\sum_t m_t^A a_t>0,\quad
 \Pi^R_{ckt}=\frac{\mathbf1[c\in S]\Pi^A_{ckt}}{Q},\quad
 m_t^R=\frac{m_t^A a_t}{Q}.
 \label{eq:revth:mask}
\end{equation}
All comparisons with $\widetilde\Pi$ extend its noncandidate entries by zero.
Thus global masking generally changes temporal occupancy. This operation
differs both from masking only a class probability vector and from
renormalizing the conditional class distribution separately at each time
while explicitly retaining $m^A$.

\begin{proposition}[When Re-solving Differs From Masking]\label{prop:mask_main}
If $\lambda=0$, then $\widetilde\Pi=\Pi^R$. If $\lambda>0$, then
$\widetilde\Pi=\Pi^R$ if and only if $a_t$ is constant over valid temporal
positions.
\end{proposition}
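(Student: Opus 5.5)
The plan is to reduce both plans to the factorized form of Appendix~\ref{app:revth:occupancy}, so that the comparison becomes a comparison of temporal occupancies only. First, compare the conditional class--prototype distributions. Since $Z_t^S=\sum_{c\in S,k}e^{-C_{ckt}/\varepsilon}=a_tZ_t^{\mathcal Y}$, we have
\[
\rho^S_{ckt}=\frac{\rho^A_{ckt}}{a_t}\ \ (c\in S),\qquad f_t^S=f_t^{\mathcal Y}-\varepsilon\log a_t .
\]
The masked plan also factorizes, as $\Pi^R_{ckt}=m^R_t\cdot\mathbf 1[c\in S]\rho^A_{ckt}/a_t$. Its conditional given $t$ is therefore exactly $\rho^S_{\cdot\cdot t}$, extended by zero outside $S$. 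Since $\widetilde\Pi_{ckt}=\widetilde m_t\rho^S_{ckt}$, and these conditionals are strictly positive on $S$, $\widetilde\Pi=\Pi^R$ holds if and only if $\widetilde m=m^R$.

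Second, characterize $\widetilde m$. By Proposition~\ref{prop:unique-interior-optimum} applied with $f^S$, $\widetilde m$ is the unique positive simplex vector satisfying the KKT system Eq.~(\ref{eq:revth:kkt}) with $f=f^S$. Hence $m^R=\widetilde m$ if and only if $m^R$ satisfies this system for some scalar $\xi$. The actor's occupancy $m^A$ satisfies the same system with $f^{\mathcal Y}$ and some multiplier $\xi^A$. Substituting $m^R_t=m^A_ta_t/Q$ and $f^S_t=f^{\mathcal Y}_t-\varepsilon\log a_t$, the $\log a_t$ terms cancel, and the actor's KKT condition gives
\[
f^S_t+\varepsilon\log\frac{m^R_t}{u_t}+\lambda n m^R_t
=\xi^A-\varepsilon\log Q+\lambda n\,m^A_t\Bigl(\frac{a_t}{Q}-1\Bigr).
\]

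Third, split into the two cases. When $\lambda=0$, the last term vanishes, so the KKT system holds and $\widetilde\Pi=\Pi^R$. Equivalently, the softmax formula Eq.~(\ref{eq:revth:gibbsoccupancy}) gives $\widetilde m_t\propto e^{-f^S_t/\varepsilon}=a_te^{-f^{\mathcal Y}_t/\varepsilon}\propto a_tm^A_t$.

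When $\lambda>0$, the condition becomes: $\kappa_t:=m^A_t(a_t/Q-1)$ is constant in $t$. If $a_t\equiv a$, then $Q=a$ and $\kappa_t\equiv0$, which gives sufficiency. For necessity, which I expect to be the only nontrivial step, note that $\sum_t\kappa_t=\sum_tm^A_ta_t/Q-\sum_tm^A_t=1-1=0$. A constant value $\kappa$ must therefore satisfy $n\kappa=0$, so $\kappa=0$. Since $m^A_t>0$ for every $t$ by Proposition~\ref{prop:unique-interior-optimum}, this forces $a_t=Q$ for all valid $t$, that is, $a_t$ is constant.

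The care points are that the cancellation and the uniqueness claim depend only on the shared $C,\varepsilon,\lambda,n,u$, and that positivity of $a_t$ and $Q$ justifies all the divisions.
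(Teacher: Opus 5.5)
Your proposal is correct and follows essentially the same route as the paper: the same conditional comparison $\rho^S=\rho^A/a_t$, the same stationarity residual $\xi^A-\varepsilon\log Q+\lambda n(m^R_t-m^A_t)$, and the same sum-to-zero argument that forces $m^R=m^A$ and hence $a_t=Q$. The only cosmetic difference is at $\lambda=0$: you note that the residual's last term vanishes, whereas the paper argues directly that restricting and renormalizing the global Gibbs distribution gives the restricted optimum.
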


\begin{proof}
The restricted conditional distribution is
$\widetilde\rho_{ckt}=\rho^A_{ckt}/a_t$ for $c\in S$, and
$\widetilde f_t=f_t^A-\varepsilon\log a_t$.
The conditional distribution of $\Pi^R$ is exactly
$\widetilde\rho$, so only the occupancies can differ. When $\lambda=0$,
Eq. (\ref{eq:revth:gibbsoccupancy}) and Eq. (\ref{eq:revth:conditional}) imply
$\Pi^A_{ckt}\propto e^{-C_{ckt}/\varepsilon}$ globally; restricting and
normalizing this Gibbs distribution is the restricted optimum.
For $\lambda>0$, insertion of $m^R$ into the restricted stationarity
expression gives
\begin{equation}
 \widetilde f_t+\varepsilon\log(m_t^R/u_t)+\lambda n m_t^R
 =\xi^A-\varepsilon\log Q+\lambda n(m_t^R-m_t^A).
 \label{eq:revth:maskresidual}
\end{equation}
If $\Pi^R$ is optimal, the right side must be constant in $t$.
Since $\lambda n>0$, $m_t^R-m_t^A$ must be constant; its sum is zero, so
$m^R=m^A$. Positivity of $m^A$ and Eq.~(\ref{eq:revth:mask}) imply $a_t=Q$
for all $t$. Conversely, constant $a_t$ gives $m^R=m^A$, making
Eq.~(\ref{eq:revth:maskresidual}) constant. The unique interior optimum
established above then implies $\Pi^R=\widetilde\Pi$.
\end{proof}
Consequently, re-solving enforces the occupancy-regularized optimality
conditions within the candidate domain. Its distinction from global masking
does not follow merely from the fact that temporal occupancy changes.
The result assumes the uniform prior and exact objectives stated above; it
does not assert equivalence for modified priors, regularizers, or solvers.

\paragraph{Why Retain The Association Between Class And Time?}
As an illustration, consider two generic class--time distributions,
\begin{equation}
 U=\begin{pmatrix}0.4&0.1\\0.1&0.4\end{pmatrix},\qquad
 V=\begin{pmatrix}0.1&0.4\\0.4&0.1\end{pmatrix}.
 \label{eq:marginal_example}
\end{equation}
Both have class and temporal marginals $(0.5,0.5)$, yet associate each class with a different time. Their class-marginal KL is zero, while $D_{\rm KL}(U\|V)=0.6\log4>0$. This example concerns distributions, not measured video predictions or an asserted actor--teacher pair. It shows why marginal agreement alone leaves the class--time association unspecified. In our construction, proximity to the proposal selects a joint target among the plans that satisfy both marginals.

\subsection{Teacher projection and the effective supervision space}
\label{app:revth:kl}

Let $q$ be the refined candidate-class marginal and $\widetilde m$ the
current proposal occupancy. Finite costs and $\varepsilon>0$ give a positive
proposal on $S\times\{1,\ldots,K\}\times\{1,\ldots,n\}$. The stated
initialization and EMA with $\mu\in[0,1)$ give $q_c>0$ on $S$.
The KL projection with class marginal $q$ and temporal marginal
$\widetilde m$ has a unique solution. A strictly positive feasible point
exists, for example $q_c\widetilde m_t/K$; the entropy boundary argument
ensures that the minimizing plan is positive on this domain. Its Lagrange
stationarity condition gives
\begin{equation}
 \Pi^T_{ckt}=\alpha_c\beta_t\widetilde\Pi_{ckt},\qquad
 \alpha_c,\beta_t>0.
 \label{eq:revth:scaling}
\end{equation}
Although the factors have a reciprocal scaling ambiguity, the plan is unique.
With $\widetilde M_{ct}=\sum_k\widetilde\Pi_{ckt}$, alternating marginal
scaling is
\begin{equation}
 \alpha_c\leftarrow\frac{q_c}{\sum_t\widetilde M_{ct}\beta_t},\qquad
 \beta_t\leftarrow\frac{\widetilde m_t}{\sum_{c\in S}\widetilde M_{ct}\alpha_c}.
 \label{eq:revth:ipfp}
\end{equation}
These updates need only the $|S|\times n$ class--time matrix. They can also be
implemented in the log domain. If a target class has exactly zero mass in an
alternative initialization, remove that row and apply the argument on the
positive target support; extend it by zeros afterwards.

Assume that actor and proposal are constructed at the same parameter point
$\theta_0$, using the same costs, prototypes, features, and $\varepsilon$,
and that the teacher correction depends only on $(c,t)$. Then
\begin{equation}
 r^A_{k\mid c,t}=r^T_{k\mid c,t}
 =\frac{e^{-C_{ckt}/\varepsilon}}{\sum_{k'}e^{-C_{ck't}/\varepsilon}}
 \quad(c\in S).
 \label{eq:revth:prototypeconditional}
\end{equation}
The temporal multiplier in the actor, the restricted partition function in
the proposal, and the teacher scaling in Eq.~(\ref{eq:revth:scaling}) all cancel
when conditioning on $(c,t)$.

\begin{proposition}[KL and Current-point Gradient Equivalence]\label{prop:kl_main}
Set $M^{A/T}_{ct}=\sum_k\Pi^{A/T}_{ckt}$ and extend teacher plans and
marginals with zeros outside $S$ (denoted by hats). Under the conditions just
stated,
\begin{align}
 D_{\mathrm{KL}}(\widehat\Pi^T\|\Pi^A)
 &=D_{\mathrm{KL}}(\widehat M^T\|M^A),
 \label{eq:revth:kl2d}\\
 &=D_{\mathrm{KL}}(q^T\|q^A)
 +\sum_{c:q_c^T>0}q_c^T
 D_{\mathrm{KL}}\bigl(P_T(t\mid c)\|P_A(t\mid c)\bigr),
 \label{eq:revth:klclass}
\end{align}
where $q_c^T=\sum_t\widehat M^T_{ct}$ is the actual teacher class marginal.
If this teacher is frozen, the three-dimensional and two-dimensional losses
also have equal first derivatives with respect to actor parameters at
$\theta=\theta_0$.
\end{proposition}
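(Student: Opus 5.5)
The plan is to prove both identities with the chain rule for KL divergence, using Eq.~(\ref{eq:revth:prototypeconditional}) to remove the prototype level. Then I prove the gradient statement by differentiating the frozen-teacher loss.

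First, for $c\in S$, factor every plan as $\Pi^{A}_{ckt}=M^{A}_{ct}\,r^{A}_{k\mid c,t}$ and $\widehat\Pi^T_{ckt}=\widehat M^{T}_{ct}\,r^{T}_{k\mid c,t}$. The actor is strictly positive on $\mathcal Y\times\{1,\ldots,K\}\times\{1,\ldots,n\}$, since it is a Gibbs conditional times the positive occupancy of Proposition~\ref{prop:unique-interior-optimum}. Hence every KL term is finite, and the zero entries of $\widehat\Pi^T$ outside $S$ contribute nothing under the convention $0\log 0=0$.

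Next, expand
\[
D_{\mathrm{KL}}(\widehat\Pi^T\|\Pi^A)=\sum_{c\in S,t}\widehat M^T_{ct}\log\frac{\widehat M^T_{ct}}{M^A_{ct}}+\sum_{c\in S,t}\widehat M^T_{ct}\,D_{\mathrm{KL}}\bigl(r^T_{\cdot\mid c,t}\|r^A_{\cdot\mid c,t}\bigr).
\]
The second sum vanishes by Eq.~(\ref{eq:revth:prototypeconditional}), which gives Eq.~(\ref{eq:revth:kl2d}). Applying the chain rule once more to $\widehat M^T_{ct}=q^T_c P_T(t\mid c)$ and $M^A_{ct}=q^A_c P_A(t\mid c)$ gives Eq.~(\ref{eq:revth:klclass}). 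Classes with $q^T_c=0$, including every $c\notin S$, drop out. The conditional $P_A(t\mid c)$ is well defined because $q^A_c>0$.

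For the gradient, freeze $\widehat\Pi^T$ at its $\theta_0$ value. The two losses are then
\[
L_3(\theta)=-\sum\widehat\Pi^T_{ckt}\log\Pi^A_{ckt}(\theta)+\text{const},\qquad
L_2(\theta)=-\sum\widehat M^T_{ct}\log M^A_{ct}(\theta)+\text{const},
\]
so their difference is
\[
L_3(\theta)-L_2(\theta)=-\sum_{c,t}\widehat M^T_{ct}\sum_k r^T_{k\mid c,t}(\theta_0)\log r^A_{k\mid c,t}(\theta)+\text{const}.
\]
This is a cross-entropy between the fixed distribution $r^T(\theta_0)$ and $r^A(\theta)$, up to a constant. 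Its derivative is $-\sum_k r^T_k\,\nabla\log r^A_k$. At $\theta_0$ we have $r^T=r^A$, so this becomes $-\nabla\sum_k r^A_k=0$ because $\sum_k r^A_k\equiv1$. Hence $\nabla L_3=\nabla L_2$ at $\theta_0$, assuming the differentiability stated at the start of the appendix.

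The main obstacle I expect is justifying Eq.~(\ref{eq:revth:prototypeconditional}) carefully, since the statement takes it as given. I would spell this out from the multiplicative forms. The actor satisfies $\Pi^A=m^A_t e^{-C/\varepsilon}/Z^{\mathcal Y}_t$. The proposal satisfies $\widetilde\Pi=\widetilde m_t e^{-C/\varepsilon}/Z^S_t$, built from the same $\sg(C)$, which equals $C$ at $\theta_0$. The teacher is $\alpha_c\beta_t\widetilde\Pi$ by Eq.~(\ref{eq:revth:scaling}). In each case the factor in front of $e^{-C/\varepsilon}$ depends only on $(c,t)$, so it cancels when normalizing over $k$. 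The rest of the argument is bookkeeping of supports and zero-mass classes.
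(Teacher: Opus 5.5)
Your proposal is correct and follows the paper's proof essentially step for step: the KL chain rule over $(c,t)$ and then $k$, with the prototype-conditional term removed by Eq.~(\ref{eq:revth:prototypeconditional}), a second chain rule over class and time, and the frozen-teacher gradient difference $-\sum_{c,t}\widehat M^T_{ct}\sum_k r^T_{k\mid c,t}\nabla_\theta\log r^A_{k\mid c,t}$, which vanishes at $\theta_0$ because $\sum_k\nabla_\theta r^A_{k\mid c,t}=0$. Your extra bookkeeping makes explicit what the paper states in the paragraph just before the proposition; this covers the strict positivity of the actor and the derivation of Eq.~(\ref{eq:revth:prototypeconditional}) from the $(c,t)$-only prefactors of actor, proposal, and teacher.
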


\begin{proof}
The KL chain rule applied to $(c,t)$ and then to $k$ gives
\begin{equation}
 D_{\mathrm{KL}}(\widehat\Pi^T\|\Pi^A)
 =D_{\mathrm{KL}}(\widehat M^T\|M^A)
 +\sum_{c,t:\widehat M^T_{ct}>0}\widehat M^T_{ct}
 D_{\mathrm{KL}}(r^T_{\cdot\mid c,t}\|r^A_{\cdot\mid c,t}).
 \label{eq:revth:klchain}
\end{equation}
Eq.~(\ref{eq:revth:prototypeconditional}) makes the last term zero.
Applying the same chain rule to class and time in $M$ proves
Eq.~(\ref{eq:revth:klclass}).
For the derivative statement, denote the teacher built at $\theta_0$ by
$T_0$ and hold it fixed while differentiating the actor. The difference
between the two loss gradients is
\begin{equation}
 -\sum_{c,t}\widehat M^{T_0}_{ct}\sum_k
 r^{T_0}_{k\mid c,t}\nabla_\theta\log r^A_{k\mid c,t}(\theta).
 \label{eq:revth:gradientdifference}
\end{equation}
At $\theta_0$, $r^{T_0}=r^A(\theta_0)$, so every inner sum is
$\sum_k\nabla_\theta r^A_{k\mid c,t}(\theta_0)
=\nabla_\theta1=0$. This proves local first-order equivalence despite
stopping gradients through the teacher; equality of values alone would
not justify that derivative conclusion.
\end{proof}
The extra supervision beyond the class marginal is therefore the
class-conditional temporal distribution, parameterized by multi-prototype
geometry. Prototype parameters still influence
$\sum_k e^{-C_{ckt}/\varepsilon}$ and generally receive nonzero gradients.
There is no additional nonzero prototype-conditional KL term under these
conditions. The result does not extend to stale teachers reused after actor
updates, different teacher costs or temperatures, prototype-dependent
corrections, or higher derivatives of a frozen-teacher loss. For an
approximate multiplicative projection, conditional preservation still holds,
but Eq.~(\ref{eq:revth:klclass}) uses its actual marginal $q^T$, which need not
equal the requested $q$ until projection residuals vanish.

\subsection{Teacher Projection and Class–Time Supervision}
\label{app:revth:degeneracy}

\begin{proposition}[No Momentum And No Occupancy Penalty]
Under the same-cost and exact-solution assumptions, let $\mu=0$ and
$\lambda=0$. Then the teacher equals the candidate proposal and
\begin{equation}
 D_{\mathrm{KL}}(\widehat\Pi^T\|\Pi^A)
 =-\log Q,\qquad Q=\sum_{c\in S}q_c^A.
 \label{eq:revth:candidatemass}
\end{equation}
The stopped-teacher gradient at its construction point also equals the
gradient of $-\log Q$.
\end{proposition}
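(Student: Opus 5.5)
The argument has three steps: show the teacher is the proposal, compute the KL in closed form, and then handle the gradient claim.

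\textbf{Step 1: the teacher equals the proposal.} With $\mu=0$, Eq.~(\ref{eq:plan_momentum_update}) gives $q^{(e)}_c=\widetilde q_c$ for $c\in S$. The teacher projection therefore asks for the KL-closest plan to $\widetilde\Pi$ whose marginals are $(\widetilde q,\widetilde m)$. These are exactly the marginals of $\widetilde\Pi$, so $\widetilde\Pi$ is feasible and has zero divergence. By the uniqueness of the projection established in Appendix~\ref{app:revth:kl}, $\Pi^T=\widetilde\Pi$; equivalently, $\alpha\equiv\beta\equiv1$ is a fixed point of Eq.~(\ref{eq:revth:ipfp}).

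\textbf{Step 2: the KL value.} Since $\lambda=0$, Proposition~\ref{prop:mask_main} gives $\widetilde\Pi=\Pi^R$. The masked plan is $\Pi^R_{ckt}=\mathbf 1[c\in S]\Pi^A_{ckt}/Q$, where Eq.~(\ref{eq:revth:mask}) defines $Q=\sum_t m^A_t a_t=\sum_{c\in S,k,t}\Pi^A_{ckt}$.
\begin{itemize}
\item This normalizer $Q$ coincides with $\sum_{c\in S}q^A_c$, the $Q$ in the statement.
\item On the support $S$, the log-ratio $\log(\widehat\Pi^T_{ckt}/\Pi^A_{ckt})=-\log Q$ is constant.
\item Off $S$, $\widehat\Pi^T$ vanishes, so those entries contribute nothing.
\end{itemize}
Hence $D_{\mathrm{KL}}(\widehat\Pi^T\|\Pi^A)=\sum_{c\in S,k,t}\widehat\Pi^T_{ckt}(-\log Q)=-\log Q$, because the teacher sums to one. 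This step avoids Proposition~\ref{prop:kl_main} entirely; it uses only the Gibbs form $\Pi^A\propto e^{-C/\varepsilon}$ from Eq.~(\ref{eq:revth:gibbsoccupancy}).

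\textbf{Step 3: the gradient at the construction point.} Freeze $T_0=\widehat\Pi^T(\theta_0)$. By Eq.~(\ref{eq:plan_gradient}), the stopped-teacher gradient is $-\sum_{c\in S,k,t}T_{0,ckt}\nabla_\theta\log\Pi^A_{ckt}(\theta)$. Write $\Pi^A_{ckt}(\theta)=Q(\theta)\,\Pi^{R}_{ckt}(\theta)$ for $c\in S$, where $\Pi^R(\theta)$ is the restricted Gibbs distribution at $\theta$. Then
\[
\nabla\log\Pi^A_{ckt}=\nabla\log Q+\nabla\log\Pi^R_{ckt}.
\]
Summing against $T_0$:
\begin{itemize}
\item The first term contributes $-\nabla\log Q$, because $T_0$ sums to one.
\item At $\theta_0$ we have $T_0=\Pi^R(\theta_0)$, so the second term becomes $-\sum\nabla\Pi^R_{ckt}(\theta_0)=-\nabla 1=0$. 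This is the same score-function identity used in the proof of Proposition~\ref{prop:kl_main}.
\end{itemize}
So the stopped-teacher gradient equals $\nabla_\theta(-\log Q)$ at $\theta_0$.

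\textbf{Main obstacle.} The care point is Step 3. We must justify that $\Pi^A$ restricted to $S$ factors exactly as $Q(\theta)\Pi^R(\theta)$ for every $\theta$ near $\theta_0$, and not just at $\theta_0$. This holds because at $\lambda=0$ the actor is globally Gibbs, $\Pi^A_{ckt}(\theta)=e^{-C_{ckt}(\theta)/\varepsilon}/\sum_{c',k',t'}e^{-C_{c'k't'}(\theta)/\varepsilon}$, for every $\theta$. Differentiability of the actor at $\theta_0$ is assumed in the appendix preamble.
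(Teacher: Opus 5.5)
Your proof is correct and matches the paper's: uniqueness of the KL projection gives $\Pi^T=\widetilde\Pi$, the $\lambda=0$ case of Proposition~\ref{prop:mask_main} gives $\widehat\Pi^T=\mathbf 1[c\in S]\Pi^A/Q$ with constant log-ratio $-\log Q$, and the gradient comes from differentiating the frozen-teacher loss at $\theta_0$. The only difference is cosmetic: the paper writes Step~3 in one line as $-\sum_{c\in S,k,t}(\Pi^A_{ckt}/Q)\nabla_\theta\log\Pi^A_{ckt}=-Q^{-1}\nabla_\theta\sum_{c\in S,k,t}\Pi^A_{ckt}$, which shows that the obstacle you flag is not one, because only the identity $T_0=\mathbf 1[c\in S]\Pi^A(\theta_0)/Q(\theta_0)$ at the construction point is needed, and in any case $\Pi^A=Q\,\Pi^R$ on $S$ holds for every $\theta$ by the definition of the masked plan.
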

\begin{proof}
When $\mu=0$, the refined marginal is the proposal class marginal.
The proposal already satisfies both teacher constraints and has zero KL
distance to itself, so the uniqueness of the projection implies
$\Pi^T=\widetilde\Pi$. The masking proposition with $\lambda=0$ gives
$\widehat\Pi^T=\mathbf1[c\in S]\Pi^A/Q$. The log ratio on its support
is the constant $-\log Q$, proving the value identity. At the teacher
construction point, differentiation with that teacher fixed gives
\begin{equation}
 -\sum_{c\in S,k,t}\frac{\Pi^A_{ckt}}{Q}\nabla_\theta\log\Pi^A_{ckt}
 =-\frac1Q\nabla_\theta\sum_{c\in S,k,t}\Pi^A_{ckt}
 =\nabla_\theta[-\log Q].
 \label{eq:revth:candidatemassgradient}
\end{equation}
No derivative through the teacher is used.
\end{proof}
This is a boundary configuration, not an identity for the full method with
positive momentum or occupancy regularization. It motivates a controlled
candidate-mass baseline without asserting its relative empirical performance.

\subsection{Effective-Support Control by Occupancy Regularization}
\label{app:revth:support}

Define $N_{\mathrm{eff}}(m)=1/\sum_t m_t^2$, so that
$1\leq N_{\mathrm{eff}}(m)\leq n$ and
\begin{equation}
 R(m)=\frac{n}{N_{\mathrm{eff}}(m)}-1
 =\chi^2(m\|u)=\exp(D_2(m\|u))-1.
 \label{eq:revth:effectivesupport}
\end{equation}
\begin{proposition}[Fixed-geometry Monotonicity]
For fixed $f,\varepsilon>0,n$, $R(m_\lambda^\star)$ is nonincreasing and
$N_{\mathrm{eff}}(m_\lambda^\star)$ is nondecreasing in $\lambda\geq0$.
Moreover, $m_\lambda^\star\to u$ as $\lambda\to\infty$.
\end{proposition}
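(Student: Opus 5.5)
The monotonicity follows from a standard exchange argument for parametric minimizers, and the limit from comparing against the uniform point. Write $G_\lambda(m)=F(m)+\frac{\lambda}{2}R(m)$, where
\[
F(m)=\langle f,m\rangle+\varepsilon\sum_t m_t\log(m_t/u_t).
\]
By Proposition~\ref{prop:unique-interior-optimum}, each $G_\lambda$ has a unique minimizer $m_\lambda^\star$ for every $\lambda\geq0$, including the Gibbs branch at $\lambda=0$.

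For monotonicity, take $0\leq\lambda_1<\lambda_2$ and write $m_1=m^\star_{\lambda_1}$, $m_2=m^\star_{\lambda_2}$. Optimality of each point gives
\[
F(m_1)+\tfrac{\lambda_1}{2}R(m_1)\leq F(m_2)+\tfrac{\lambda_1}{2}R(m_2),
\qquad
F(m_2)+\tfrac{\lambda_2}{2}R(m_2)\leq F(m_1)+\tfrac{\lambda_2}{2}R(m_1).
\]
Adding the two inequalities cancels $F$ and leaves $\tfrac{\lambda_2-\lambda_1}{2}\bigl(R(m_2)-R(m_1)\bigr)\leq0$. Hence $R(m_2)\leq R(m_1)$, so $R(m^\star_\lambda)$ is nonincreasing. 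Since $R=n/N_{\mathrm{eff}}-1$ by Eq.~(\ref{eq:revth:effectivesupport}) and $t\mapsto n/t-1$ is strictly decreasing on $t>0$, $N_{\mathrm{eff}}(m^\star_\lambda)$ is nondecreasing. This step needs no differentiability and covers the $\lambda=0$ endpoint directly.

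For the limit, compare $m_\lambda^\star$ with the uniform point $u$, where $R(u)=0$. Then
\[
F(m_\lambda^\star)+\tfrac{\lambda}{2}R(m_\lambda^\star)\leq F(u).
\]
The function $F$ is continuous on the compact simplex, so it is bounded below by some $F_{\min}$. This gives
\[
0\leq R(m_\lambda^\star)\leq \frac{2\,(F(u)-F_{\min})}{\lambda}\to0 .
\]
Because $R(m)=n\|m-u\|_2^2$ on the simplex (expand $n\sum_t m_t^2-1$ using $\sum_t m_t=1$), we conclude $\|m_\lambda^\star-u\|_2^2\leq 2(F(u)-F_{\min})/(n\lambda)$, so $m_\lambda^\star\to u$ with an explicit $O(\lambda^{-1/2})$ rate.

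The only step needing care is the identity $R(m)=n\|m-u\|^2$, which is what turns the vanishing penalty into convergence of the minimizer. An alternative check uses the bracket Eq.~(\ref{eq:revth:bracket}) together with the KKT condition Eq.~(\ref{eq:revth:kkt}), which give $\lambda n|m_t-u_t|\leq \max_s f_s-\min_s f_s+\varepsilon|\log(m_t/u_t)|$. That route requires extra bounds on the logarithm, so the variational comparison above is the cleaner approach.
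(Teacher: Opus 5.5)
Your proof is correct and follows the paper's argument essentially step for step: you add the two optimality inequalities, use $R=n/N_{\mathrm{eff}}-1$ for the support statement, and compare with the feasible uniform point, using $R(m)=n\|m-u\|_2^2$, to get the limit. Your explicit $O(\lambda^{-1/2})$ convergence rate is a small refinement that the paper's bound implies but does not state.
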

\begin{proof}
Write $F(m)=\langle f,m\rangle+\varepsilon D_{\mathrm{KL}}(m\|u)$.
For $\lambda_2>\lambda_1\geq0$, optimality at the two coefficients gives
\begin{align*}
 F(m_1)+\tfrac{\lambda_1}{2}R(m_1)
 &\leq F(m_2)+\tfrac{\lambda_1}{2}R(m_2),\\
 F(m_2)+\tfrac{\lambda_2}{2}R(m_2)
 &\leq F(m_1)+\tfrac{\lambda_2}{2}R(m_1).
\end{align*}
Adding them yields
$(\lambda_2-\lambda_1)(R(m_2)-R(m_1))\leq0$.
Eq.~(\ref{eq:revth:effectivesupport}) gives the support statement.
Since $R(u)=0$, comparison with the feasible $u$ gives, for $\lambda>0$,
\begin{equation}
 0\leq R(m_\lambda^\star)
 \leq\frac{2[F(u)-\min_{m\in\Delta_n}F(m)]}{\lambda}.
 \label{eq:revth:uniformlimitbound}
\end{equation}
The numerator is finite by compactness. As
$R(m)=n\|m-u\|_2^2$, the bound implies convergence to $u$.
At $\lambda=0$, Eq.~(\ref{eq:revth:gibbsoccupancy}) gives the entropy-smoothed
Gibbs occupancy.
\end{proof}
This statement holds with geometry fixed. During learning $f$ changes, so it
does not establish monotonic support across training updates or monotonic
accuracy in $\lambda$. Effective support measures mass concentration, not
semantic diversity or true-action coverage; overly diffuse mass can also
assign weight to the background around a short action.

\subsection{Differentiability and Computational Complexity}
\label{app:revth:numerics}

For an exact interior solution, set
$w_t=(\varepsilon/m_t^\star+\lambda n)^{-1}>0$.
Differentiating Eq.~(\ref{eq:revth:kkt}) gives
$dm_t=w_t(d\xi-df_t)$, and $\sum_t dm_t=0$ implies
$d\xi=(\sum_t w_tdf_t)/(\sum_t w_t)$. Hence
\begin{equation}
 \frac{\partial m^\star}{\partial f}
 =-\operatorname{diag}(w)+\frac{ww^\top}{\sum_t w_t}.
 \label{eq:revth:jacobian}
\end{equation}
This matrix is symmetric and negative semidefinite: for any vector $v$, its
quadratic form is the negative weighted variance
$-\sum_t w_tv_t^2+(\sum_t w_tv_t)^2/\sum_t w_t\leq0$.
Its null direction $\mathbf1$ reflects invariance to adding a common
constant to $f$. The formula also covers $\lambda=0$, yielding the usual
softmax Jacobian with a negative inverse-temperature factor. A
Jacobian--vector product costs $O(n)$ without materializing an $n\times n$
matrix. Differentiation of the complete actor must additionally include
$\partial f_t/\partial C_{ckt}=\rho_{ckt}$ and the conditional Gibbs
derivative; retaining only the occupancy path would change the gradient.

For feature dimension $d$, the dense spherical matching costs
$O(CKnd)$ operations and $O(CKn)$ storage when scores are materialized.
Given these scores, the conditional distributions and free energies cost
$O(CKn)$. With $J_A$ and $J_S$ scalar root iterations, the actor and
proposal occupancy solves cost $O((J_A+J_S)n)$, in addition to the
$O(|S|Kn)$ restricted reductions. With $I$ marginal-scaling iterations,
projection on $\widetilde M$ costs $O(I|S|n)$; a three-dimensional teacher,
if materialized, costs $O(|S|Kn)$ storage. These counts exclude the temporal
encoder and depend on the requested numerical precision. The scalar
normalization root does not remove the class--prototype matching cost. \looseness=-1

For numerical evaluation, define $h_t=\log(\lambda/\varepsilon)+(\xi-f_t)/\varepsilon$ and let $\omega_t=W(e^{h_t})$. Rather than explicitly forming $e^{h_t}$, we evaluate $\omega_t$ from $\omega_t+\log\omega_t=h_t$ using stable Newton updates and recover $m_t=\varepsilon\omega_t/(\lambda n)$. The implementation uses eight fixed Lambert-$W$ updates per evaluation and twelve Newton iterations for the normalization root, with the latter performed in double precision. The $\lambda=0$ branch uses the closed-form softmax solution.

An approximate solver should report its mass residual
$|\sum_t m_t-1|$, stationarity residual
$\max_t|f_t+\varepsilon\log(m_t/u_t)+\lambda n m_t-\xi|$, and both teacher
marginal residuals
$\|\sum_{k,t}\Pi^T_{ckt}-q_c\|_\infty$ and
$\|\sum_{c,k}\Pi^T_{ckt}-\widetilde m_t\|_\infty$.

\section{Dataset construction and implementation details}\label{app:protocol}

\subsection{More Information on Benchmark Construction}

We construct VCMIPL benchmarks from Breakfast~\citep{kuehne2014breakfast}, DoTA~\citep{yao2023dota}, and FineAction~\citep{liu2022fineaction}. Table~\ref{tab:dataset_statistics} summarizes the video counts, class counts, and candidate-set sizes. Each example consists of an ordered video bag and a candidate label set containing its ground-truth class. For FineAction, we retain videos with a single annotated target class and exclude classes containing fewer than 50 videos. The resulting benchmarks contain 1,989 videos from 10 classes for Breakfast, 4,577 videos from 9 classes for DoTA, and 10,675 videos from 78 classes for FineAction. 
In the output space, we generate candidate labels using a VideoMAEv2 recognizer equipped with a TimeSformerHead in MMAction2~\citep{2020mmaction2}. Candidate probabilities are obtained through five-fold cross-fitting: the recognizer is fine-tuned using the exact video labels from four folds and predicts the remaining fold. Repeating this procedure produces an out-of-fold prediction for every video. The recognizer is fine-tuned for 30 epochs on DoTA and 100 epochs on both Breakfast and FineAction.  \looseness=-1

Following the sample-wise confidence accumulation and class-wise selection principles of CPL~\citep{zhang2024candidate}, we combine two criteria. Let $\mathbf p_i=(p_{i1},\ldots,p_{iC})$ denote the out-of-fold class-probability vector for video $i$, and let $c_{i1},\ldots,c_{iC}$ order the classes by decreasing confidence. The sample-wise candidate set contains the smallest number of classes whose cumulative probability reaches $95\%$:
\begin{equation}
A_i=\{c_{i1},\ldots,c_{i\ell_i}\},\qquad
\ell_i=\min\left\{\ell\in\{1,\ldots,C\}:\sum_{j=1}^{\ell}p_{i,c_{ij}}\geq0.95\right\}.
\label{eq:app:samplewise_candidates}
\end{equation}
For each class $c$, let $B_c$ denote the indices of the top $50\%$ of videos in the class-wise ranking population, ordered by their confidence for that class. Intersecting the two selections gives
\begin{equation}
\widetilde S_i=A_i\cap\{c\in\mathcal Y:i\in B_c\}.
\label{eq:app:intersected_candidates}
\end{equation}
To satisfy the standard MIPL assumption, we include the ground-truth class whenever it is absent:
\begin{equation}
S_i=\widetilde S_i\cup\{Y_i\}.
\label{eq:app:final_candidates}
\end{equation}
Consequently, the final candidate set is nonempty and satisfies $Y_i\in S_i$ by construction. This guarantee applies to the corrected candidate sets, rather than to the recognizer's predictions before correction. Exact labels are used to train the candidate recognizer and perform this inclusion step during benchmark construction; downstream MIPL learning uses only the resulting candidate sets as supervision.

\begin{table}[!t]
\centering
\caption{Video-level statistics of the constructed MIPL benchmarks.}
\label{tab:dataset_statistics}
\begin{tabular}{lrrcc}
\toprule
Dataset & Videos & Classes & Mean $|S_i|$ & Range $|S_i|$\\
\midrule
Breakfast & 1,989 & 10 & 3.3896 & 1--9\\
DoTA & 4,577 & 9 & 3.2733 & 1--8\\
FineAction & 10,675 & 78 & 11.5592 & 2--64\\
\bottomrule
\end{tabular}
\end{table}

\subsection{Feature Representations and Dataset Statistics}
ResNet~\citep{he2016resnet} and DINOv3~\citep{simeoni2025dinov3} provide frame-level features, whereas SlowFast~\citep{feichtenhofer2019slowfast} and VideoMAEv2~\citep{wang2023videomaev2} provide clip-level features. These representations yield twelve dataset--feature configurations spanning different temporal granularities and feature dimensions.

Table~\ref{tab:dataset_statistics_full} reports the complete instance statistics. Total instances denotes the number of extracted frame or clip features across all videos; maximum, minimum, and mean refer to the number of instances per video. Dimension denotes the width of each feature vector. Within each dataset, the four representations share the video, class, and candidate-set statistics in Table~\ref{tab:dataset_statistics}. The fine-tuning budgets in the dataset headers refer to the candidate recognizer, not to the individual feature extractors.  \looseness=-1

\begin{table}[htbp]
\centering\small
\caption{Instance and feature statistics for all twelve dataset--feature configurations.}
\label{tab:dataset_statistics_full}
\setlength{\tabcolsep}{5pt}
\begin{tabular}{lrrrrr}
\toprule
Feature & Total instances & Maximum & Minimum & Mean & Dimension\\
\midrule
\multicolumn{6}{c}{\textbf{Breakfast: 1,989 videos, 10 classes, 100 fine-tuning epochs}}\\
\multicolumn{6}{c}{Candidate size: 1--9, mean 3.3896}\\
\midrule
ResNet & 4,103,316 & 9,745 & 186 & 2,063.00 & 2,048\\
DINOv3 & 4,103,316 & 9,745 & 186 & 2,063.00 & 4,096\\
SlowFast & 1,026,577 & 2,437 & 47 & 516.13 & 2,304\\
VideoMAEv2 & 1,026,577 & 2,437 & 47 & 516.13 & 768\\
\midrule
\multicolumn{6}{c}{\textbf{DoTA: 4,577 videos, 9 classes, 30 fine-tuning epochs}}\\
\multicolumn{6}{c}{Candidate size: 1--8, mean 3.2733}\\
\midrule
ResNet & 469,756 & 300 & 26 & 102.63 & 2,048\\
DINOv3 & 469,756 & 300 & 26 & 102.63 & 4,096\\
SlowFast & 119,188 & 75 & 7 & 26.04 & 2,304\\
VideoMAEv2 & 119,188 & 75 & 7 & 26.04 & 768\\
\midrule
\multicolumn{6}{c}{\textbf{FineAction: 10,675 videos, 78 classes, 100 fine-tuning epochs}}\\
\multicolumn{6}{c}{Candidate size: 2--64, mean 11.5592}\\
\midrule
ResNet & 37,723,340 & 64,644 & 36 & 3,533.80 & 2,048\\
DINOv3 & 37,723,340 & 64,644 & 36 & 3,533.80 & 4,096\\
SlowFast & 9,518,494 & 47,372 & 9 & 891.66 & 2,304\\
VideoMAEv2 & 9,518,494 & 47,372 & 9 & 891.66 & 768\\
\bottomrule
\end{tabular}
\end{table}

\subsection{More Implementation Settings}\label{app:implementation}
We adopt a fixed five-fold evaluation protocol for all three benchmarks. In each fold, four predefined folds are used for training and the remaining fold for testing, and the reported ACC and BACC are averaged over the five held-out folds. Candidate labels are generated in an out-of-fold manner using the same fold partition: the candidate recognizer is fine-tuned on four folds and applied to the held-out fold, and this procedure is repeated until every video receives an out-of-fold prediction. The downstream MIPL models are then trained using only the resulting candidate label sets as supervision. {\ours} is optimized for 40 epochs with one complete video bag per optimization step, without temporal padding, and the same training protocol is used across all feature representations unless otherwise specified. Ablation studies are conducted on Fold~1 and evaluated at epoch~40. Detailed optimization, architecture, transport, and teacher-projection settings are summarized in Table~\ref{tab:implementation_settings}. \looseness=-1

\begin{table}[!t]
\centering
\small
\setlength{\tabcolsep}{4pt}
\renewcommand{\arraystretch}{1.10}
\caption{Experimental settings and configurable implementation defaults of {\ours}.}
\label{tab:implementation_settings}
\begin{tabular}{@{}p{0.34\linewidth}p{0.62\linewidth}@{}}
\toprule
\textbf{Setting} & \textbf{Value or implementation}\\
Framework / hardware & PyTorch / NVIDIA H100\\
Optimizer & AdamW; $(\beta_1,\beta_2)=(0.9,0.999)$; weight decay $0$\\
Learning rates & Prototypes: $5\times10^{-3}$; temporal encoder: $5\times10^{-4}$\\
Gradient clipping & Global $\ell_2$ norm clipped to $5.0$\\
Training budget & $40$ epochs\\
Motion module & Average-filter kernel $3$; convolution kernel $5$; dilations $1,2$; motion-residual dropout $0.1$\\
Candidate recognizer budget & DoTA: $30$ epochs; Breakfast and FineAction: $100$ epochs\\
Main evaluation & Five folds; mean and standard deviation of ACC and BACC\\
Ablation evaluation & Fold~1 ACC at epoch~40\\
Learning-rate schedule & Constant; no warm-up or scheduler\\
Batch size & One complete video bag per optimization step\\
Prototypes per class & Breakfast: $K=4$; DoTA: $K=8$; FineAction: $K=12$\\
Class momentum & $\mu=0.9$\\
Transport regularization & Entropic coefficient $\varepsilon=0.07$; occupancy coefficient $\lambda=0.01$\\
Temporal encoder & Hidden dimension $512$; $4$ bidirectional Mamba-2 blocks\\
Feed-forward network & Expansion factor $4$; intermediate dimension $2048$\\
Mamba-2 configuration & State dimension $128$; convolution width $4$; expansion factor $2$\\
Mamba-2 computation & Head dimension $64$; chunk size $256$; memory-efficient path enabled\\
Teacher projection & At most $120$ IPFP iterations; marginal tolerance $10^{-6}$\\
Base random seed & $123$, with fold-dependent offsets\\
\bottomrule
\end{tabular}
\end{table}

\begingroup
\raggedbottom
\setlength{\textfloatsep}{12pt plus 2pt minus 2pt}
\setlength{\floatsep}{12pt plus 2pt minus 2pt}
\setlength{\intextsep}{8pt plus 2pt minus 2pt}
\renewcommand{\bottomfraction}{0.95}
\renewcommand{\textfraction}{0.05}
\renewcommand{\floatpagefraction}{0.85}
\makeatletter
\setlength{\@fptop}{0pt}
\setlength{\@fpsep}{12pt}
\makeatother
\section{Additional Experimental Results}\label{app:results}
\subsection{Balanced Accuracy Results}\label{BACC}
Table~\ref{tab:overall-BACC-comparison} further confirms the advantage of {\ours} under class-balanced evaluation. {\ours} achieves the highest mean BACC, the mean recall over ground-truth classes represented in each test fold, in 10 of the 12 dataset--feature configurations, with an average gain of 3.26\% over the strongest baseline. Notably, the improvement pattern closely follows that of ACC, indicating that the gains are not primarily driven by better prediction on dominant classes, but extend across the class distribution. {\ours} also ranks first in all four DoTA-MIPL configurations, where the substantially lower BACC values make balanced recognition particularly challenging. The only exceptions again occur with VideoMAEv2, suggesting that the reduced headroom observed in the ACC results persists under class-balanced evaluation.
\begin{table}[!htbp]
\centering
\small
\renewcommand{\arraystretch}{0.95}
\setlength{\tabcolsep}{8pt}
\caption{Balanced accuracy on the benchmark datasets. Results are reported as (mean $\pm$ std \%). The best result in each column is bolded, and the second-best results are underlined.}\label{tab:overall-BACC-comparison}
\begin{tabular}{llcccc}
\toprule
Dataset & Method & DINOv3 & VideoMAEv2 & ResNet & SlowFast\\
\midrule
\multirow{8}{*}{\textbf{\shortstack{Breakfast\\-MIPL}}}
& \demipl & $10.2\%\,{\scriptstyle\pm\,0.4\%}$ & $24.4\%\,{\scriptstyle\pm\,10.2\%}$ & $10.6\%\,{\scriptstyle\pm\,0.6\%}$ & $10.0\%\,{\scriptstyle\pm\,0.0\%}$\\*
& \elimipl & $27.9\%\,{\scriptstyle\pm\,0.8\%}$ & $52.4\%\,{\scriptstyle\pm\,2.1\%}$ & $26.0\%\,{\scriptstyle\pm\,1.9\%}$ & $10.0\%\,{\scriptstyle\pm\,0.0\%}$\\*
& \miplma & $34.2\%\,{\scriptstyle\pm\,1.5\%}$ & $63.8\%\,{\scriptstyle\pm\,2.0\%}$ & $30.8\%\,{\scriptstyle\pm\,2.6\%}$ & $10.0\%\,{\scriptstyle\pm\,0.0\%}$\\*
& \promipl & $\underline{61.9\%}\,{\scriptstyle\pm\,1.1\%}$ & $83.3\%\,{\scriptstyle\pm\,3.5\%}$ & $\underline{43.7\%}\,{\scriptstyle\pm\,1.0\%}$ & $41.4\%\,{\scriptstyle\pm\,1.1\%}$\\*
& \psmipl & $47.7\%\,{\scriptstyle\pm\,4.9\%}$ & $70.1\%\,{\scriptstyle\pm\,7.1\%}$ & $10.9\%\,{\scriptstyle\pm\,1.4\%}$ & $32.9\%\,{\scriptstyle\pm\,0.5\%}$\\*
& \dualg & $45.2\%\,{\scriptstyle\pm\,2.7\%}$ & $65.1\%\,{\scriptstyle\pm\,4.1\%}$ & $34.7\%\,{\scriptstyle\pm\,2.2\%}$ & $13.8\%\,{\scriptstyle\pm\,1.7\%}$\\*
& \agopmipl & $46.0\%\,{\scriptstyle\pm\,4.3\%}$ & $\mathbf{87.2\%}\,{\scriptstyle\pm\,4.3\%}$ & $33.3\%\,{\scriptstyle\pm\,3.3\%}$ & $\underline{50.1\%}\,{\scriptstyle\pm\,2.4\%}$\\*
& {\ours} & $\mathbf{66.0\%}\,{\scriptstyle\pm\,1.0\%}$ & $\underline{85.2\%}\,{\scriptstyle\pm\,4.3\%}$ & $\mathbf{52.2\%}\,{\scriptstyle\pm\,3.3\%}$ & $\mathbf{55.8\%}\,{\scriptstyle\pm\,1.8\%}$\\
\midrule
\multirow{8}{*}{\textbf{\shortstack{DoTA\\-MIPL}}}
& \demipl & $24.8\%\,{\scriptstyle\pm\,0.7\%}$ & $42.7\%\,{\scriptstyle\pm\,0.5\%}$ & $23.2\%\,{\scriptstyle\pm\,0.4\%}$ & $18.2\%\,{\scriptstyle\pm\,2.6\%}$\\*
& \elimipl & $24.8\%\,{\scriptstyle\pm\,0.8\%}$ & $42.9\%\,{\scriptstyle\pm\,0.8\%}$ & $23.9\%\,{\scriptstyle\pm\,0.6\%}$ & $21.2\%\,{\scriptstyle\pm\,1.7\%}$\\*
& \miplma & $25.8\%\,{\scriptstyle\pm\,0.6\%}$ & $43.2\%\,{\scriptstyle\pm\,0.5\%}$ & $23.9\%\,{\scriptstyle\pm\,0.5\%}$ & $21.5\%\,{\scriptstyle\pm\,3.4\%}$\\*
& \promipl & $\underline{32.7\%}\,{\scriptstyle\pm\,1.6\%}$ & $58.0\%\,{\scriptstyle\pm\,1.5\%}$ & $\underline{28.8\%}\,{\scriptstyle\pm\,2.0\%}$ & $\underline{28.8\%}\,{\scriptstyle\pm\,0.8\%}$\\*
& \psmipl & $30.0\%\,{\scriptstyle\pm\,0.9\%}$ & $\underline{58.3\%}\,{\scriptstyle\pm\,1.4\%}$ & $26.2\%\,{\scriptstyle\pm\,0.5\%}$ & $14.9\%\,{\scriptstyle\pm\,4.7\%}$\\*
& \dualg & $26.0\%\,{\scriptstyle\pm\,0.7\%}$ & $44.1\%\,{\scriptstyle\pm\,0.6\%}$ & $24.9\%\,{\scriptstyle\pm\,0.8\%}$ & $24.2\%\,{\scriptstyle\pm\,0.9\%}$\\*
& \agopmipl & $28.6\%\,{\scriptstyle\pm\,0.9\%}$ & $51.4\%\,{\scriptstyle\pm\,3.1\%}$ & $26.1\%\,{\scriptstyle\pm\,1.0\%}$ & $27.5\%\,{\scriptstyle\pm\,1.0\%}$\\*
& {\ours} & $\mathbf{33.8\%}\,{\scriptstyle\pm\,1.5\%}$ & $\mathbf{63.2\%}\,{\scriptstyle\pm\,3.5\%}$ & $\mathbf{31.8\%}\,{\scriptstyle\pm\,1.2\%}$ & $\mathbf{31.8\%}\,{\scriptstyle\pm\,0.7\%}$\\
\midrule
\multirow{8}{*}{\textbf{\shortstack{FineAction\\-MIPL}}}
& \demipl & $36.0\%\,{\scriptstyle\pm\,1.8\%}$ & $48.8\%\,{\scriptstyle\pm\,2.8\%}$ & $18.3\%\,{\scriptstyle\pm\,1.3\%}$ & $44.6\%\,{\scriptstyle\pm\,2.1\%}$\\*
& \elimipl & $34.8\%\,{\scriptstyle\pm\,1.6\%}$ & $44.0\%\,{\scriptstyle\pm\,2.7\%}$ & $18.9\%\,{\scriptstyle\pm\,2.0\%}$ & $44.7\%\,{\scriptstyle\pm\,1.4\%}$\\*
& \miplma & $49.6\%\,{\scriptstyle\pm\,2.0\%}$ & $61.1\%\,{\scriptstyle\pm\,0.3\%}$ & $30.6\%\,{\scriptstyle\pm\,2.7\%}$ & $51.4\%\,{\scriptstyle\pm\,2.2\%}$\\*
& \promipl & $79.0\%\,{\scriptstyle\pm\,1.3\%}$ & $91.0\%\,{\scriptstyle\pm\,3.5\%}$ & $66.5\%\,{\scriptstyle\pm\,1.6\%}$ & $80.2\%\,{\scriptstyle\pm\,2.0\%}$\\*
& \psmipl & $57.2\%\,{\scriptstyle\pm\,1.1\%}$ & $63.1\%\,{\scriptstyle\pm\,1.6\%}$ & $39.4\%\,{\scriptstyle\pm\,1.8\%}$ & $29.2\%\,{\scriptstyle\pm\,14.1\%}$\\*
& \dualg & $53.7\%\,{\scriptstyle\pm\,0.6\%}$ & $66.3\%\,{\scriptstyle\pm\,1.2\%}$ & $40.3\%\,{\scriptstyle\pm\,1.1\%}$ & $57.4\%\,{\scriptstyle\pm\,0.7\%}$\\*
& \agopmipl & $\underline{81.7\%}\,{\scriptstyle\pm\,1.1\%}$ & $\mathbf{95.2\%}\,{\scriptstyle\pm\,1.2\%}$ & $\underline{67.5\%}\,{\scriptstyle\pm\,1.6\%}$ & $\underline{82.0\%}\,{\scriptstyle\pm\,3.0\%}$\\*
& {\ours} & $\mathbf{86.3\%}\,{\scriptstyle\pm\,1.5\%}$ & $\underline{92.0\%}\,{\scriptstyle\pm\,1.7\%}$ & $\mathbf{73.6\%}\,{\scriptstyle\pm\,1.1\%}$ & $\mathbf{85.3\%}\,{\scriptstyle\pm\,1.4\%}$\\
\bottomrule
\end{tabular}
\end{table}

\subsection{Efficiency Analysis} \label{Efficiency analysis}
We also evaluate the efficiency of \ours, presented in Figure~\ref{computation_acc}. Figure~\ref{computation_acc}  compares classification accuracy against training time. {\ours} exhibits a favorable accuracy--efficiency trade-off across most dataset--feature configurations; the exceptions occur in the VideoMAEv2 settings where {\agopmipl} can achieve a Pareto-superior operating point. 
\begin{figure}[!t]
\centering
\includegraphics[width=\linewidth]{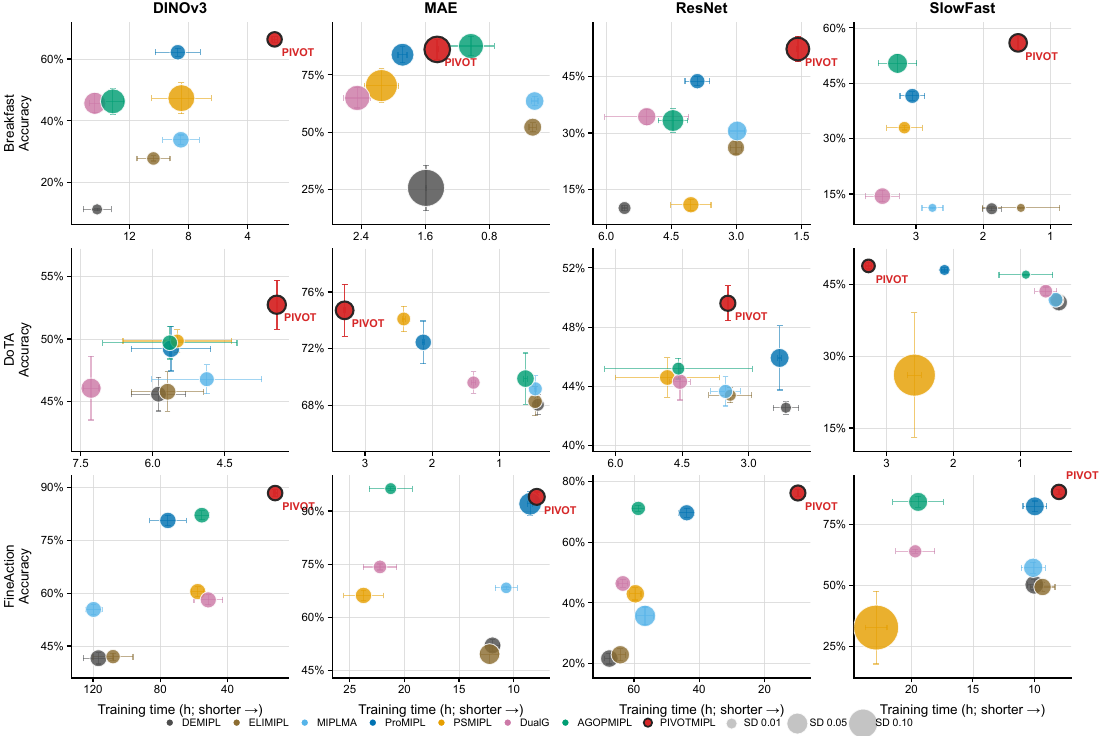}
\caption{Accuracy--training-time trade-off across the twelve dataset--feature configurations. Each panel corresponds to one dataset--feature setting and compares {\ours} with the MIPL baselines in terms of mean classification accuracy and total training time over five folds. Higher accuracy and shorter training time are preferred.}
\label{computation_acc}
\end{figure}

\subsection{Ablation Analysis}
\subsubsection{Ablation Summaries}\label{Ablation summaries}
Table~\ref{tab:ablation_summary} shows that the gains of {\ours} are distributed across multiple components rather than dominated by a single design choice. Replacing the plan-level objective with class-level KL (A3) or removing the temporal-marginal constraint from the teacher (A4) degrades performance in all 12 configurations, highlighting joint class--time supervision and dual-marginal projection as the most structurally consistent contributors. Removing the temporal encoder (A6) causes the largest average drop, while the strong performance gap retained against encoder-matched baselines indicates that temporal contextualization alone does not explain the full advantage. Finally, A1, A2, and A5 show that occupancy regularization, adaptive temporal allocation, and multiple prototypes each provide complementary gains, supporting the need to model both temporal concentration and intra-class variation. \looseness=-1
\begin{table}[!b]
\centering
\caption{Summary of the ablations across twelve dataset--feature configurations.}
\label{tab:ablation_summary}
\small
\begin{tabular}{llcc}
\toprule
Variant & Modification & Full-model wins & Mean gain\\
\midrule
A1 & No occupancy regularization & 10/12 & $+3.25\%$\\
A2 & Uniform temporal occupancy & 9/12 & $+0.72\%$\\
A3 & Class-level KL & 12/12 & $+3.26\%$\\
A4 & Class-marginal-only teacher & 12/12 & $+2.53\%$\\
A5 & Single prototype per class & 11/12 & $+2.52\%$\\
A6 & No temporal encoder & 11/12 & $+5.77\%$\\
\bottomrule
\end{tabular}
\end{table}

\subsubsection{Analysis Of The Effectiveness Of The Temporal Encoder} \label{te}
\begin{table}[!t]
\centering
\caption{Accuracy for \agopmipl, {\agopmipl} augmented with the same temporal encoder architecture as {\ours}, and {\ours}. The best and second-best values within each row are shown in bold and underlined, respectively.}
\label{1}
\small
\renewcommand{\arraystretch}{0.95}
\setlength{\tabcolsep}{8pt}
\begin{tabular}{llccc}
\toprule
Dataset & Feature & \agopmipl & \shortstack{{\agopmipl}\,$+$\\Temporal Encoder} & {\ours}\\
\midrule
\multirow{4}{*}{\textbf{Breakfast}} & DINOv3 & $46.4\%$ & $\underline{53.1\%}$ & $\mathbf{66.4\%}$\\
 & VideoMAEv2 & $\mathbf{79.3\%}$ & $\underline{78.1\%}$ & $77.4\%$\\
 & ResNet & $31.4\%$ & $\underline{45.4\%}$ & $\mathbf{52.2\%}$\\
 & SlowFast & $50.6\%$ & $\underline{53.1\%}$ & $\mathbf{53.9\%}$\\
\midrule
\multirow{4}{*}{\textbf{DoTA}} & DINOv3 & $\underline{49.2\%}$ & $48.4\%$ & $\mathbf{54.3\%}$\\
 & VideoMAEv2 & $69.5\%$ & $\underline{69.9\%}$ & $\mathbf{72.8\%}$\\
 & ResNet & $46.0\%$ & $\underline{48.6\%}$ & $\mathbf{50.9\%}$\\
 & SlowFast & $46.9\%$ & $\underline{47.6\%}$ & $\mathbf{48.4\%}$\\
\midrule
\multirow{4}{*}{\textbf{FineAction}} & DINOv3 & $85.5\%$ & $\underline{86.8\%}$ & $\mathbf{90.0\%}$\\
 & VideoMAEv2 & $\underline{96.7\%}$ & $\mathbf{97.2\%}$ & $92.4\%$\\
 & ResNet & $73.1\%$ & $\underline{75.4\%}$ & $\mathbf{78.1\%}$\\
 & SlowFast & $87.9\%$ & $\underline{89.2\%}$ & $\mathbf{90.0\%}$\\
\bottomrule
\end{tabular}
\end{table}

To examine whether adding temporal encoding alone can reproduce the performance advantage of {\ours}, we equip {\agopmipl}, the strongest baseline, with the same temporal encoder architecture. We compare the original {\agopmipl}, its encoder-augmented variant, and {\ours}. The result of the above shows {\ours} retains higher ACC in 10 configurations, with an average advantage of $2.83\%$ over the encoder-augmented baseline. Together with A6, this comparison suggests that temporal encoding contributes to performance, but adding the same encoder architecture to {\agopmipl} does not by itself account for the observed advantage of {\ours} across most configurations. \looseness=-1

\subsection{Parameter Sensitivity Analysis}  \label{sensitivity analysis}
We examine the sensitivity of {\ours} to the two transport-specific hyperparameters, the entropic coefficient $\varepsilon$ and the occupancy coefficient $\lambda$ in Eq.~(\ref{eq:occupancy_transport}), using three representative Fold-1 configurations: Breakfast--ResNet, DoTA--VideoMAEv2, and FineAction--SlowFast. As shown in Figure~\ref{Parameter Sensitivity}, performance varies only moderately over the evaluated ranges for both parameters, despite substantial differences in dataset scale and feature representation. The two parameters play complementary roles in the transport inference: $\varepsilon$ controls the sharpness of the class--prototype--time assignment, whereas $\lambda$ regulates the concentration of the temporal marginal. A very small $\varepsilon$ favors sharper matching, while increasing it produces smoother assignments; similarly, increasing $\lambda$ discourages excessive concentration of mass on a small number of temporal positions. The relatively broad performance plateaus therefore suggest that {\ours} does not depend on a narrowly tuned balance between these two effects. Instead, the structured teacher and plan-level supervision remain effective under a range of assignment smoothness and temporal occupancy regularization strengths. This robustness is particularly desirable in VCMIPL, where sequence length, temporal redundancy, and feature granularity vary substantially across datasets.

\begin{figure}[!t]
    \centering
    \includegraphics[width=\linewidth]{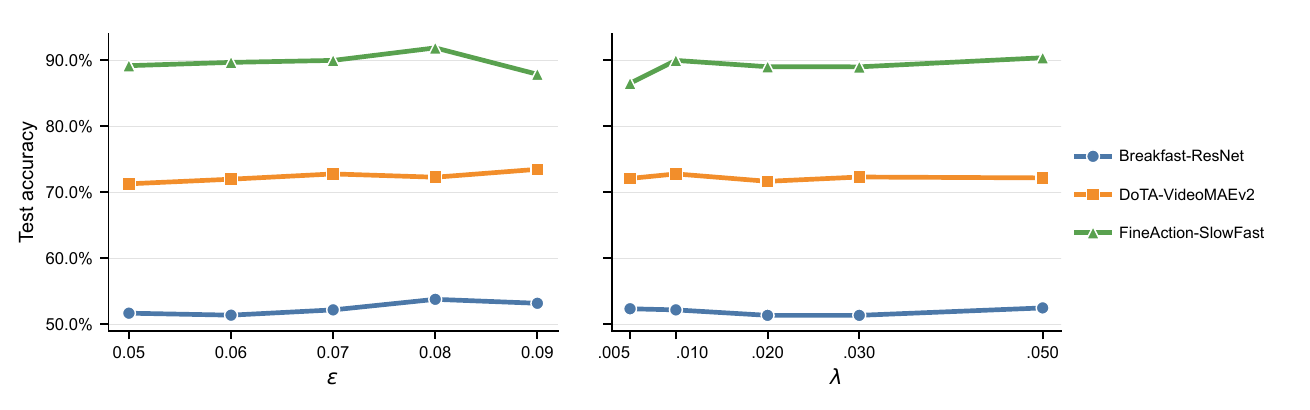}
    \vspace{-10pt}
    \caption{Sensitivity of {\ours} to the entropic coefficient $\varepsilon$ and occupancy coefficient $\lambda$ on three representative dataset--feature configurations.}
    \label{Parameter Sensitivity}
\end{figure}

\par\endgroup
\section{Limitations}
Our study focuses on offline VCMIPL within a fixed label space. Although the experiments span multiple datasets and feature representations, they do not establish robustness to distribution shifts or evolving class vocabularies. Assessing how class--time associations transfer across such changes remains a direction for future work. Moreover, the current formulation uses bidirectional temporal context and normalizes assignments over the observed video. Extending it to strictly online prediction would require causal contextualization and incremental assignment updates that account for incomplete temporal evidence.

\end{document}